\newcommand{\poly}{\mathrm{poly}}
\newcommand{\reg}{\mathrm{Reg}}
\newcommand{\gap}{\mathrm{gap}}
\newcommand{\clip}{\mathrm{clip}}
\newcommand{\relevant}{\mathrm{rel}}
\newcommand{\dom}{\mathrm{dom}}
\newcommand{\diam}{\mathrm{diam}}
\newcommand{\version}{arxiv}
\definecolor{darkgreen}{rgb}{0,0.5,0}
\definecolor{darkred}{rgb}{0.7,0,0}
\definecolor{teal}{rgb}{0.3,0.8,0.8}
\definecolor{orange}{rgb}{1.0,0.5,0.0}
\definecolor{purple}{rgb}{0.8,0.0,0.8}
\newcommand{\kibitz}[2]{\ifnum\Comments=1{\textcolor{#1}{\textsf{\footnotesize #2}}}\fi}
\title{Provably adaptive reinforcement learning in metric spaces}
\date{}
\author[1]{Tongyi Cao\thanks{tcao@cs.umass.edu}}
\author[2]{Akshay Krishnamurthy\thanks{akshaykr@microsoft.com}}
\affil[1]{University of Massachusetts, Amherst, MA}
\affil[2]{Microsoft Research, New York, NY}
\begin{document}

\maketitle

\vspace{-1cm}
\begin{abstract}
We study reinforcement learning in continuous state and action spaces
endowed with a metric. We provide a refined analysis of a variant of the algorithm
of Sinclair, Banerjee, and Yu (2019) and show that its regret scales
with the \emph{zooming dimension} of the instance. This parameter,
which originates in the bandit literature, captures the size of the
subsets of near optimal actions and is always smaller than the
covering dimension used in previous analyses. As such, our results are
the first provably adaptive guarantees for reinforcement learning in
metric spaces.

\end{abstract}

\section{Introduction}
\label{sec:introduction}

In reinforcement learning (RL), an agent learns to select actions to
navigate a state space and accumulates reward.  In terms of theoretical
results, the majority of results address the tabular setting, where
the number of states and actions are finite and comparatively
small. However, tabular problems are rarely encountered in practical
applications, as state and action spaces are often large and may even
be continuous. To address these practically relevant settings, a
growing body of work has developed algorithmic principles and
guarantees for reinforcement learning in continuous spaces.

In this paper, we contribute to this line of work on reinforcement
learning in continuous spaces. We consider episodic RL where the joint
state-action space is endowed with a metric and we posit that the
optimal $Q^\star$ function is \emph{Lipschitz continuous} with respect
to this metric. This setup has been studied in several recent works
establishing worst case regret bounds that scale with the covering
dimension of the metric
space~\citep{song2019efficient,sinclair2019adaptive,touati2020zooming}. While
these results are encouraging, the guarantees are overly pessimistic,
and intuition from the special case of Lipschitz bandits suggests that
much more adaptive guarantees are achievable. In particular, while the
Lipschitz contextual bandits setting of~\citet{slivkins2014contextual}
is a special case of this setup, no existing analysis recovers his
adaptive guarantee that scales with the \emph{zooming dimension} of
the problem.

\paragraph{Our contribution.}
We give the first analysis for reinforcement learning in metric spaces
that scales with the zooming dimension of the instance instead of the
covering dimension of the metric space. The zooming dimension,
originally defined by~\citet{kleinberg2019bandits} in the context of
Lipschitz bandits, measures the size of the set of near-optimal
actions and can be much smaller than the covering dimension in
favorable instances. For reinforcement learning, the natural
generalization is to measure near-optimality relative to the $Q^\star$
function; this recovers the definition of~\citet{kleinberg2019bandits}
and~\citet{slivkins2014contextual} for bandits and contextual bandits,
respectively as special cases. As a consequence, our guarantees also
strictly generalize theirs to the multi-step reinforcement learning
setting. In addition, our guarantee addresses an open problem
of~\citet{sinclair2019adaptive} by characterizing problems where
refined guarantees are possible.


Our result is based on a refined analysis of a variant of the algorithm
of~\citet{sinclair2019adaptive}. This algorithm uses optimism to select
actions and an adaptive discretization scheme to carefully refine a
coarse partition of the state-action space to focus (``zoom in'') on
promising regions. Adaptive discretization is essential for obtaining
instance-dependent guarantees, but the bounds
in~\citet{sinclair2019adaptive} do not reflect this favorable behavior.

At a technical level, the main challenge is that, unlike in bandits,
we cannot upper bound the number of times a highly suboptimal arm will
be selected by the optimistic strategy. Analysis for the bandit
setting uses these upper bounds to prove that the adaptive
discretization scheme will not zoom in on suboptimal regions, which is
crucial for the instance-dependent bounds. However, in RL, the
algorithm actually can zoom in on and select actions in suboptimal
regions, but only when there is significant error at later time
steps. Thus, in the analysis, we credit error incurred from a highly
suboptimal region to the later time steps, so we can proceed as if we
never zoomed in on this region at all. Formally, this analysis uses
the \emph{clipped regret decomposition} of~\citet{simchowitz2019non}
as well as a careful bookkeeping argument to obtain the
instance-dependent bound.

\paragraph{Changes from the initial version.} 
The present version of the paper corrects an error in the version
published in NeurIPS 2020. The differences are both in the algorithm,
which is no longer identical to that of~\citet{sinclair2019adaptive},
and in the analysis, which is somewhat more involved. The changes
address an issue that arises when a child ball inherits updates from
its parent, which results in each sample appearing many times with the
same weight $\alpha_t^i$ in the recursive regret decomposition used in the tabular analysis of~\citet{jin2018q}, displayed in~\pref{eq:old_q_bd}. This
ultimately compromises the final regret bound, which crucially uses
that these weights form a convergent series.

The fix is that child balls no longer inherit data from the parent so
that every interaction tuple (of state, action, reward, next state)
results in exactly one update. This ensures that the $\alpha_t^i$ weight sequences
converge, but is also problematic, as child balls are initialized with
large bonuses, so the confidence sum does not capture the zooming
property we hope to exhibit. We resolve this latter issue with a
\emph{buffering phase} where a child ball is slowly updated but is never
played. Specifically, once a parent ball has received enough samples,
we split it and mark the children as buffering. While the children are
buffering, we continue to use the parent for action selection and
mostly continue to update the parent, but every $H+1^{\textrm{st}}$
update is instead performed on the child. Once the child has enough
updates that the bonus is small, we move it out of the
buffering phase and can safely use it for decision making.

Unfortunately, the buffering approach means that the parent ball is
periodically chosen but not updated, which again results in a weight
sequence where some terms (specifically every $H^{\textrm{th}}$ term)
appears twice. However this sequence is much more benign than the one
that arises if we re-use samples. Indeed, we can show that this new
sequence is convergent via a new amortizing argument that relates it
to the original one in~\citet{jin2018q}.

The final challenge is that now the parent ball remains active for
much longer. This results in a final regret bound that now scales
polynomially with $\Lambda$, the maximum number of children that a
parent can have (or the doubling constant of the metric space), and
additionally is polynomially worse in its dependence on the horizon
$H$ than the bound claimed in the NeurIPS 2020 version of the paper. 
On the
other hand, the new bound still captures the adaptive and zooming
nature of the algorithm.



\section{Preliminaries}

We consider a finite-horizon episodic reinforcement learning setting
in which an agent interacts with an MDP, defined by a tuple
$(\Scal,\Acal,H,\PP,r)$. Here
$\Scal$ the state space, $\Acal$ is the action space, $H \in \NN$ is
the horizon, $\PP$ is the transition operator and $r$ is the reward
function. Formally, $\PP: \Scal\times\Acal \to \Delta(\Scal)$ and
$r:\Scal\times\Acal \to [0,1]$ where $\Delta(\cdot)$ denotes the set
of distributions over its argument.\footnote{Deterministic rewards
  simplifies the presentation but has no bearing on the final
  results. In particular, we can handle stochastic bounded rewards
  with minimal modification to the proofs.}

A (nonstationary) policy $\pi$ is a mapping from states to
distributions over actions for each time. Every policy has
non-stationary value and action-value functions, defined as
\begin{align*}
V_h^\pi(x)\defeq \EE_\pi\sbr{\sum_{h'=h}^H r_{h'}(x_{h'},a_{h'}) \mid x_h = x}, \qquad Q_h^\pi(x,a) \defeq r_h(x,a) + \EE\sbr{V_{h+1}^\pi(x') \mid x,a}.
\end{align*}
Here $\EE_\pi\sbr{\cdot}$ denotes that all actions are chosen by
policy $\pi$ and transitions are given by $\PP$. The optimal policy
$\pi^\star$ and optimal action-value function $Q^\star$ are defined
recursively as
\begin{align*}
Q_h^\star(x,a) \defeq r_h(x,a) + \EE\sbr{\max_{a'}Q^\star(x',a')\mid x,a}, \qquad \pi_h^\star(x) = \argmax_{a} Q_h^\star(x,a).
\end{align*}
The optimal value function $V_h^\star$ is defined analogously.

The agent interacts with the MDP for $K$ episodes, where in episode $k$
the agent picks a policy $\pi_k$ and we generate the trajectory $\tau_k =
(x^k_1,a^k_1,r^k_1,x^k_2,a^k_2,r^k_2\ldots,x^k_H,a^k_H,r^k_H)$ where (1)
$x^k_1$ is chosen adversarially, (2) $a_h^k=\pi_k(x_h^k)$, (3)
$x^k_{h+1} \sim \PP(\cdot \mid x^k_h,a^k_h)$, (4) $r^k_h =
r(x^k_h,a^k_h)$.  We would like to choose actions to maximize the
cumulative rewards $\sum_{h=1}^H r^k_h$.

Equipped with these definitions, we can state our performance
criterion. Over the course of $K$ episodes, we would like to
accumulate reward that is comparable to the optimal policy, formalized
via the notion of regret:
\begin{align*}
\reg(K) \defeq \sum_{k=1}^K \rbr{V_1^\star(x_1^k) - \sum_{h=1}^H r_h^k}.
\end{align*}
In particular, we seek algorithms with regret rate that is sublinear
in $K$. Note that we have not assumed that $|\Scal|$ and $|\Acal|$ are
finite, and we also allow for the starting state $x_1^k$ to be chosen
adversarially in each episode.



\subsection{Metric spaces.}

Instead of assuming that $|\Scal|$ and $|\Acal|$ are finite, we will
posit a metric structure on these spaces. We recall the key
definitions for metric spaces. A space $Y$ equipped with a function
$\Dcal:Y \times Y \to \RR_+$ is a \emph{metric space} if $\Dcal$
satisfies (a) $\Dcal(y,y') = 0$ iff $y = y'$ (b) $\Dcal$ is symmetric,
and (c) $\Dcal$ satisfies the triangle inequality $\Dcal(x,y) \leq
\Dcal(x,z) + \Dcal(z,y)$. If these properties hold then $\Dcal$ is
called a \emph{metric}.  For a radius $r>0$, we use the notation
$B(y,r)\defeq\{y' \in Y: \Dcal(y,y') < r\}$ to denote the open ball
centered at $y$ with radius $r$. For a subset $Y'\subseteq Y$ the
\emph{diameter} is defined as $\diam(Y') \defeq \sup_{y,y' \in
  Y'}\Dcal(y,y')$.
We also use the standard notions of covering and packing to measure
the size of metric spaces.

\begin{definition}[Notions of size]
  We define the following notions of size for a metric space. 
  \begin{itemize}
    \item A \emph{covering} of $Y$ at scale $r$ (also called an $r$-covering) is
a collection of subsets of $Y$, each with diameter at most $r$, whose
union equals $Y$. The minimum number of subsets that form an
$r$-covering is the $r$-covering number, denoted $N_r(Y)$. 
\item  A \emph{packing} of $Y$ at scale $r$ (also called an $r$-packing) is a
collection of points $Z\subset Y$ such that $\min_{z\ne z' \in Z}
D(z,z') \geq r$. The maximum number of points that form an $r$-packing
is the $r$-packing number, denoted $N_r^{\text{pack}}(Y)$.  
\item An \emph{$r$-net} of $Y$ is an $r$-packing $S\subset Y$ for which
  $\{B(y,r)\}_{y \in S}$ covers $Y$.
\item Define the \emph{doubling constant} $\Lambda(Y) := \max_{r>0, y \in Y} N_{r/2}(B(y,r))$, which is the maximum number of balls of radius $\nicefrac{r}{2}$ required to cover some ball of radius $r$.
\end{itemize}
\end{definition}

These definitions also apply to subsets of the metric space, which
will be important for our development. Also note that
$N_{2r}^{\text{pack}}(Y) \leq N_r(Y) \leq N_r^{\text{pack}}(Y)$.




\subsection{Main Assumptions}
We now state the main assumptions that we adopt in our analysis. These
or closely related assumptions are standard in the literature on
bandits and reinforcement learning in metric
spaces~\citep{song2019efficient,sinclair2019adaptive,touati2020zooming,slivkins2014contextual}.
\begin{assum}
\label{assum:diameter}
$(\Scal \times \Acal, \Dcal)$ is a metric space with finite diameter $\diam(\Scal \times \Acal) = d_{\max} < \infty$.
\end{assum}

\begin{assum}
\label{assum:lipschitz}
For every $h\in [H]$, $Q^{\star}_h$ is $L$-Lipschitz continuous with respect to $\Dcal$:
\begin{align}
\forall (x,a), (x',a'): \abr{ Q^\star_h(x,a) - Q^\star_h(x',a')} \le L\cdot\Dcal((x,a), (x',a')).\label{eq:q_lip}
\end{align}
Additionally $V_h^\star$ is $L$-Lipschitz with respect to the metric $\Dcal_\Scal: (x,x') \mapsto \min_{a,a'}\Dcal((x,a),(x',a'))$:
\begin{align}
\forall x,x': \abr{ V^\star_h(x) - V^\star_h(x')} \le L\cdot\min_{a,a'}\Dcal((x,a), (x',a')).\label{eq:v_lip}
\end{align}
\end{assum}

Assumption~\ref{assum:diameter} is a basic regularity condition, while
the first part of Assumption~\ref{assum:lipschitz} imposes continuity
of the $Q^\star$ function. In particular, Lipschitz-continuity
characterizes how the metric structure influences the reinforcement
learning problem. These assumptions appear in prior work,
and we note that~\eqref{eq:q_lip} is strictly weaker than assuming
that $\PP$ is Lipschitz
continuous~\citep{kakade2003exploration,ortner2012online}.

The second part of Assumption~\ref{assum:lipschitz} reflects an
additional structural assumption on the problem, which is a departure
from previous work. In detail,~\eqref{eq:v_lip} posits that the
optimal value function $V_h^\star$ is $L$-Lipschitz with respect to a
metric defined only on the states that is derived from the original
one. This metric is dominated by the original one since for each
$(x,x',a)$ we have $\min_{a_1,a_2}\Dcal((x,a_1),(x',a_2)) \leq
\Dcal((x,a),(x',a))$, so this assumption is not directly implied
by~\eqref{eq:q_lip}. However, whenever $\Dcal$ is
sub-additive in the sense that $\Dcal((x,a),(x',a')) \leq
\Dcal_\Scal(x,x') + \Dcal_\Acal(a,a')$, then the assumption holds
trivially. Sub-additivity holds for most metrics of interest,
including those induced by $\ell_p$ norms for $p \geq 1$. As such,
we do not view this assumption as particularly restrictive.

\subsection{Related work}
Reinforcement learning in the tabular setting, where the state and
action spaces are finite, is relatively
well-understood~\citep{azar2017minimax,dann2017unifying,zanette2019tighter}. Of
this line of work, the two most related papers are those of
of~\citet{jin2018q} and~\citet{simchowitz2019non}. Our results build
on the model-free/martingale analysis of~\citet{jin2018q}, which has
been used in recent work on RL in metric
spaces~\citep{song2019efficient,sinclair2019adaptive,touati2020zooming}. We
also employ techniques from the gap-dependent analysis
of~\citet{simchowitz2019non}. In particular, we use a version of their
``clipping'' argument, as we will explain in
Section~\ref{sec:analysis}.

Moving beyond the tabular setting, several papers study reinforcement
learning in metric spaces, originating with the results
of~\citet{kakade2003exploration}
(c.f.,~\citet{ortner2012online,ortner2013adaptive,song2019efficient,yang2019learning,sinclair2019adaptive,touati2020zooming}). Of
these, the most related result is that of~\citet{sinclair2019adaptive}
who study the adaptive discretization algorithm and give a worst-case
regret analysis, showing that the algorithm has a regret rate of
$K^{\frac{d+1}{d+2}}$ where $d$ is the covering dimension of the
metric space. Essentially the same results appear
in~\citet{touati2020zooming}, although the algorithm is slightly
different. However, none of these results give sharper
instance-dependence guarantees that reflect benign problem structure,
as we will obtain.

For the special case of (contextual) bandits, several
instance-dependent guarantees that yield improved regret rates
exist~\citep{auer2007improved,valko2013stochastic,kleinberg2019bandits,bubeck2011x,slivkins2014contextual,krishnamurthy2019contextual}. For
non-contextual bandits, the results and assumptions vary considerably,
but most results quantify a benign instance in terms of the size of
the set of near-optimal actions. The formulation that we adopt is the
notion of \emph{zooming dimension}, which measures the growth rate of
the $r$-packing number of the set of $O(r)$-suboptimal arms. This
notion has been used in several works on bandits and contextual
bandits in metric spaces, and we will recover some of these results as
a special case of our main theorem.

\section{Main Results}

Our main result is a regret bound that scales with the \emph{zooming
  dimension}. We introduce this parameter with a sequence of
definitions. First, we define the $\gap$ function, which describes the
sub-optimality of an action $a$ for state $x$.
\begin{definition}[Gap]
For any $(x,a) \in \Scal \times \Acal$, for $h\in [H]$, the stage-dependent sub-optimality gap is
\begin{align*}
 \gap_h(x,a) \defeq V^\star_h(x) - Q_h^{\star}(x,a).
 \end{align*}
\end{definition}

We use the gaps to define the subset of the metric space that is near-optimal.
\begin{definition}[Near-optimal set]
\label{def:good_set}
We define near-optimal set as
\begin{align*}
\Pcal_{h,r}^{Q^\star} \defeq \cbr{(x, a) \in \Scal \times \Acal: \gap_h(x,a) \leq \rbr{\frac{2(H+1)}{d_{\max}}+ 2L} r}.
\end{align*}
\end{definition}
Intuitively, $\Pcal_{h,r}^{Q^\star}$ is the set of state-action pairs
with gap that is $O(r)$ at stage $h$. The constant in the definition
is a consequence of our analysis, but it is quite similar to the
constant in the definition of~\citet{slivkins2014contextual} for
contextual bandits. In particular, he considers $d_{\max}=1, H=1,L=1$
and obtains a constant of $12$, while we obtain a constant of $6$ in
this case.


Finally, we define the zooming number and the zooming dimension.
\begin{definition}[Zooming number and dimension]
\label{def:zooming}
The $r$-zooming number is the $r$-packing number of the near-optimal
set $\Pcal_{h,r}^{Q^\star}$, that is
$N_{r}^{\text{pack}}(\Pcal_{h,r}^{Q^\star})$. The stage-dependent
zooming dimension is defined as
\begin{align*}
z_{h,c} \defeq \inf\cbr{ d>0 :  N^{\text{pack}}_{r}(\Pcal_{h,r}^{Q^\star}) \le cr^{-d}, \forall r \in (0,d_{\max}]}.
\end{align*}
The zooming dimension for the instance as the largest among all stages $z_c = \max_{h\in[H]} z_{h,c}$.
\end{definition}

\begin{wrapfigure}{R}{0.4\textwidth}
\ifthenelse{\equal{\version}{arxiv}}{\vspace{-1.5cm}}{}
\begin{center}
\definecolor{b1}{rgb}{0,0,1.0}
\definecolor{b2}{rgb}{0.2,0.2,1.0}
\definecolor{b3}{rgb}{0.4,0.4,1.0}
\begin{tikzpicture}
\draw[thick] (0,0) -- (4,0) node[anchor=north west] {$x$};
\draw[thick] (0,0) -- (0,3) node[anchor=south east] {$a$};


\draw[name path=C2,line width=0.1mm, b3] (0,0.25) .. controls (1,0.0) and (2,2.75) .. (4,1.75);
\draw[name path=B2,line width=0.1mm, b2] (0,0.5) .. controls (1,0.25) and (2,3) .. (4,2);
\draw[name path=A2,line width=0.1mm, b1] (0,0.7) .. controls (1,0.45) and (2,3.2) .. (4,2.2);
\draw[name path=A1,line width=0.1mm, b1] (0,0.8) .. controls (1,0.55) and (2,3.3) .. (4,2.3);
\draw[name path=B1,line width=0.1mm, b2] (0,1) .. controls (1,0.75) and (2,3.5) .. (4,2.5);
\draw[name path=C1,line width=0.1mm, b3] (0,1.25) .. controls (1,1) and (2,3.75) .. (4,2.75);
\tikzfillbetween[of=A1 and A2] {b1,opacity=0.8};
\tikzfillbetween[of=A1 and B1] {b2,opacity=0.6};
\tikzfillbetween[of=A2 and B2] {b2,opacity=0.6};
\tikzfillbetween[of=B1 and C1] {b3,opacity=0.4};
\tikzfillbetween[of=B2 and C2] {b3,opacity=0.4};
\draw[thick,|-|] (1.5, 1.91) -- (1.5, 1.41);
\node [align=left] at (2.8,1.25) {near optimal \\ actions for $x$};
\end{tikzpicture}
\end{center}
\ifthenelse{\equal{\version}{arxiv}}{\vspace{-0.75cm}}{}
\caption{An example where the zooming dimension is $1$ while the
  the covering dimension is $2$. }
\label{fig:zooming_example}
\end{wrapfigure}
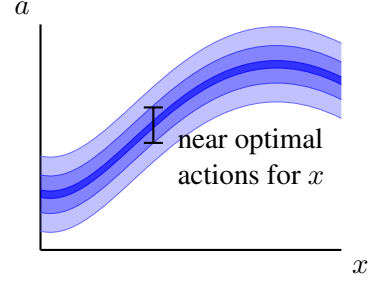
Intuitively, the zooming dimension measures how the near-optimal region
grows as we change the sub-optimality level $r$. Importantly, we use
$r$ both to parametrize the radius in the packing number and the
sub-optimality. Thus, the zooming number captures how many $r$-separated points
can be packed into the $O(r)$ sub-optimal
region.

The more standard notion of complexity of a metric space is the
\emph{covering dimension}, defined as
\begin{align*}
d_c \defeq \inf\{d > 0, N_r^{\text{pack}}( \Scal \times \Acal) \leq cr^{-d}, \forall r \in (0,d_{\max}]\}.
\end{align*}

Examining the definitions, it is clear that we have $z_c \leq d_c$,
since the packing numbers are only smaller. However, in benign
instances where the sub-optimal region concentrates to a low
dimensional manifold, we may have $z_c < d_c$ (and possibly much
smaller), which will enable sharper regret bounds. An example is
illustrated in Figure~\ref{fig:zooming_example}, where the set of
near-optimal actions concentrates on a narrow band for each $x$. Thus
the entire space and hence the covering dimension is $2$-dimensional,
but the zooming dimension is $1$. More generally, if $\Scal$ is a
$d_S$ dimensional space and $\Acal$ is a $d_A$ dimensional space, then
the covering dimension could be $\Omega(d_S + d_A)$ while the zooming
dimension could be as small as $O(d_S)$.

With these definitions, we can now state the main theorem.
\begin{theorem}
\label{thm:main}
For any initial states $\{x_1^k : k\in [K]\}$, and any $\delta \in
(0,1)$, with probability at least, $1-\delta$ Adaptive Q-learning has
the following regret\footnote{Throughout the paper $\tilde{O}(\cdot)$ suppresses logarithmic dependence in its argument.}
\begin{align*}
\reg(K) \le & \tilde{O} \left((H^{3/2}+\sqrt{H\Lambda}) \inf_{r_0 \in (0, d_{\max}]} \left( \sum_{h=1}^{H}\sum_{r=d_{\max}2^{-i},r\ge r_0} N^{\text{pack}}_r(\Pcal_{h,r}^{Q^\star}) \frac{  d_{\max}\sqrt{H\Lambda}}{r} + \frac{Kr_0}{d_{\max}} \right)\right)\\
& ~~~~~~~~ +\tilde{O}\rbr{H^2 + \sqrt{H^3K\log(1/\delta)}}.
\end{align*}
\end{theorem}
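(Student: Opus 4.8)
The plan is to refine the optimism-based martingale analysis of Sinclair et al.\ by layering on the clipped regret decomposition of Simchowitz et al.\ together with a new visit-counting argument tailored to the zooming structure. Recall that Adaptive Q-learning maintains, for every step $h$ and episode $k$, an adaptive partition $\Pcal_h^k$ of $\Scal\times\Acal$ into balls of dyadic radii; it plays the ball $B_h^k$ with the largest optimistic index $\bar Q_h^k$, increments that ball's visit count, and splits a radius-$r$ ball into children once its count reaches a scale-dependent threshold $n^+(r)\asymp(d_{max}/r)^2$. The optimistic index of a radius-$r$ ball with $n$ visits carries a confidence width of order $\sqrt{H^3\log(\cdot)/n}+Lr$, so over the ball's whole life the accumulated width is $\sum_{n=1}^{n^+(r)}\tilde O(\sqrt{H^3/n})=\tilde O(H^{3/2}d_{max}/r)$, which is the source of the $H^{3/2}d_{max}/r$ factor in the theorem. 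The target is a decomposition $\reg(K)\lesssim\sum_h\sum_{\text{balls }B}(\text{count of }B)\cdot(\text{accumulated width of }B)+\text{l.o.t.}$, where the counts are controlled by zooming numbers above a threshold scale $r_0$ and by a global visit budget below it.

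\textbf{Optimism, decomposition, and martingale terms.} First I would carry over, essentially verbatim from Sinclair et al., the high-probability optimism guarantee $\bar Q_h^k(B)\ge Q_h^\star(x,a)-(\text{discretization slack})$ for all visited balls; this is where both parts of Assumption~\ref{assum:lipschitz} are used --- the value-function Lipschitzness~\eqref{eq:v_lip} is needed to control the discretization error of the running value estimate across successor states. Unrolling the Q-learning recursion in the style of Jin et al.\ then bounds $\reg(K)$ by a sum over episodes and steps of the confidence width of the selected ball, plus a learning-rate-weighted sum of errors at later steps (the standard propagation contributing the universal factor $e$), plus an Azuma--Hoeffding martingale term that accounts for the $\tilde O(\sqrt{H^3K\log(1/\delta)})$ and $\tilde O(H^2)$ lower-order terms. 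Finally I would invoke the clipping lemma of Simchowitz et al.: it absorbs the propagated later-step errors and upgrades the bound to $\reg(K)\lesssim\sum_k\sum_h\clip\!\left[\text{width}(B_h^k)\,\middle|\,c\,\gap_h(x_h^k,a_h^k)/H\right]+\text{l.o.t.}$, where $\clip[x\mid\epsilon]=x\cdot\mathbb{1}\{x\ge\epsilon\}$, so that only ``substantial'' surpluses survive.

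\textbf{The counting argument (the hard part).} Grouping the surviving clipped surpluses by selected ball, I would rewrite the main term as $\sum_h\sum_r\sum_{B:\,r(B)=r}\sum_{n=1}^{n(B)}\clip[\text{width}_n(B)\mid\cdot]$. The crucial structural claim is that every radius-$r$ ball that is ever selected (after creation) lies in $\Pcal_{h,r}^{Q^\star}$: by $L$-Lipschitzness of $Q^\star_h$ it suffices to show $\gap_h(x,a)\le\big(\tfrac{2(H+1)}{d_{max}}+2L\big)r$ at the selected point, and the numerical constant in Definition~\ref{def:good_set} is chosen precisely so that this holds. In Lipschitz bandits the claim is immediate --- a ball ceases to be selected once its width drops below its gap --- but in RL it can fail, because the optimistic index $\bar Q_h^k(B)$ can stay inflated by estimation error at steps $>h$ even after $B$'s own width has shrunk. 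This is exactly where the argument is most delicate, and where the bookkeeping trick enters: when $B$ is selected at episode $k$ although its width is below $c\,\gap_h(x_h^k,a_h^k)$, the surplus inequality forces the propagated term $\EE[\bar V_{h+1}^k(x')-V_{h+1}^\star(x')\mid x_h^k,a_h^k]$ to be $\Omega(\gap_h(x_h^k,a_h^k))$; one then declines to charge this episode's contribution to $B$ and instead re-credits it to step $h+1$, where the genuine error lies, proceeding as if $B$ had not been visited at this episode. The two things to check are (i) that after all such re-creditings every surviving charge at step $h$ is to a ball in $\Pcal_{h,r}^{Q^\star}$, so at each scale $r$ there are at most $N^{\text{pack}}_r(\Pcal_{h,r}^{Q^\star})$ charged balls; and (ii) that the re-credited mass, re-absorbed when step $h+1$ is processed against its own clip thresholds, telescopes without blow-up. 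Making this recursion close is the bookkeeping argument advertised in the introduction, and I expect it to be the main obstacle in the write-up.

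\textbf{Assembling the bound.} Granting the claim, fix $r_0\in(0,d_{max}]$. For dyadic scales $r\ge r_0$ there are at most $N^{\text{pack}}_r(\Pcal_{h,r}^{Q^\star})$ charged balls at step $h$, each contributing accumulated width $\tilde O(H^{3/2}d_{max}/r)$ (the $Lr$ discretization part never exceeds this over the ball's lifetime); summing over $h$ and over $r=d_{max}2^{-i}\ge r_0$ gives the first term $\tilde O\big(H^{3/2}\sum_h\sum_{r\ge r_0}N^{\text{pack}}_r(\Pcal_{h,r}^{Q^\star})\,d_{max}/r\big)$. For scales $r<r_0$ I would instead bound the number of balls by a visit budget: a radius-$r$ ball can exist only if its chain of ancestors from scale $r_0$ downward was each fully visited, so the number of radius-$r$ balls ever created at step $h$ is $\tilde O(Kr^2/d_{max}^2)$ (the $K$ total visits divided by the parent split-threshold); multiplying by the per-ball accumulated width $\tilde O(H^{3/2}d_{max}/r)$ and summing the resulting geometric series over $r<r_0$ yields $\tilde O(H^{3/2}Kr_0/d_{max})$. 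Taking the infimum over $r_0$ and adding the martingale and lower-order terms collected earlier gives the stated bound; as a sanity check, using $N^{\text{pack}}_r(\Pcal_{h,r}^{Q^\star})\le N^{\text{pack}}_r(\Scal\times\Acal)\le cr^{-d_c}$ and optimizing $r_0$ recovers the worst-case rate $K^{(d_c+1)/(d_c+2)}$ of Sinclair et al.
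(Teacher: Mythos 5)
Your plan follows essentially the same route as the paper: optimism plus the Jin-et-al.\ recursion, the Simchowitz-et-al.\ clipping of surpluses that are small relative to the gap, and a per-ball counting argument resting on the dichotomy (surplus always clipped because $\gap_h(B)\gtrsim (H+1)r(B)/d_{max}$, versus the ball's center lying in $\Pcal_{h,r}^{Q^\star}$ by Lipschitzness of the gap), combined with a visit-budget bound for scales below $r_0$. The recursion-closing issue you flag as the main obstacle is resolved in the paper exactly by the $(1+1/H)$ factor you already mention: the clipped upper bound lemma absorbs any unclipped-but-small surplus into a $(1+1/H)$ inflation of the downstream recursive term, which compounds only to $e^2$ over $H$ steps, so no separate re-crediting bookkeeping is needed.
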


Before turning to a discussion of the theorem, we state some
corollaries. First, by optimizing $r_0$, we obtain a regret bound in terms of the zooming dimension.
\begin{corollary}
For any initial states $\{x_1^k : k\in [K]\}$, and any $\delta \in
(0,1)$, with probability at least $1-\delta$ Adaptive Q-learning has
$\reg(K) \leq \tilde{O}\rbr{H^{\frac{5}{2}+\frac{1}{2z_c+4}} K^{\frac{z_c+1}{z_c+2}}}$, for any
constant $c>0$.
\end{corollary}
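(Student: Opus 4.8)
The plan is to derive the corollary from Theorem~\ref{thm:main} by substituting the zooming-dimension bound on the packing numbers and then choosing $r_0$ to balance the two terms inside the infimum. Fix the constant $c>0$ that defines $z_c$. By Definition~\ref{def:zooming} (with this same $c$), we have $N^{\text{pack}}_r(\Pcal_{h,r}^{Q^\star})\le c\,r^{-z_{h,c}}\le c\,r^{-z_c}$ for every $h$ and every $r\in(0,\min\{1,d_{max}\}]$, the second inequality using $z_{h,c}\le z_c$ together with $r\le 1$; if the infimum defining $z_{h,c}$ is not attained, one replaces $z_{h,c}$ by $z_{h,c}+\epsilon$ and sends $\epsilon\downarrow 0$ at the end, which changes no exponent. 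The $O(\log d_{max})$ coarser dyadic scales $r\in(1,d_{max}]$ (present only if $d_{max}>1$) have $N^{\text{pack}}_r(\Pcal_{h,r}^{Q^\star})=O(1)$ and so contribute only $O(H\log d_{max})$ to the first sum of Theorem~\ref{thm:main}, which is a lower-order term absorbed into the $\tilde{O}(H^{5/2})$ remainder. Substituting, for any $r_0\in(0,d_{max}]$,
\begin{align*}
\sum_{h=1}^{H}\sum_{r=d_{max}2^{-i},\,r\ge r_0} N^{\text{pack}}_r(\Pcal_{h,r}^{Q^\star})\,\frac{d_{max}}{r} \;\le\; c\,d_{max}\,H \sum_{r=d_{max}2^{-i},\,r\ge r_0} r^{-(z_c+1)} .
\end{align*}
The inner geometric series in $i$ has ratio $2^{z_c+1}\ge 2$, hence is dominated, up to a factor $2$, by its largest term at the smallest admissible scale $r\asymp r_0$; so it is $O\rbr{r_0^{-(z_c+1)}}$, and the quantity inside the infimum of Theorem~\ref{thm:main} is $O\rbr{c\,d_{max}\,H\,r_0^{-(z_c+1)} + Kr_0/d_{max}}$.

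Next I would optimize over $r_0$ by balancing the two summands; zeroing the derivative in $r_0$ gives $r_0^\star\asymp\rbr{cHd_{max}^2/K}^{1/(z_c+2)}$, at which both summands equal $O\rbr{(cH)^{1/(z_c+2)}\,d_{max}^{-z_c/(z_c+2)}\,K^{(z_c+1)/(z_c+2)}}$. Multiplying by the leading $H^{3/2}$ factor of Theorem~\ref{thm:main} and using $z_c\ge 0$, which gives $1/(z_c+2)\le 1/2$, the first term of the regret bound is
\begin{align*}
\tilde{O}\rbr{c^{1/(z_c+2)}\,d_{max}^{-z_c/(z_c+2)}\,H^{3/2+1/(z_c+2)}\,K^{(z_c+1)/(z_c+2)}} \;\le\; \tilde{O}\rbr{H^{5/2}\,K^{(z_c+1)/(z_c+2)}},
\end{align*}
after absorbing the instance constants $c$ and $d_{max}$ into $\tilde{O}$. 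For the lower-order term of Theorem~\ref{thm:main}, $z_c\ge 0$ gives $(z_c+1)/(z_c+2)\ge 1/2$, so $\sqrt{H^3K\log(1/\delta)}=H^{3/2}K^{1/2}\sqrt{\log(1/\delta)}\le\tilde{O}\rbr{H^{5/2}K^{(z_c+1)/(z_c+2)}}$ and $H^2\le H^{5/2}$; hence $\tilde{O}(H^2+\sqrt{H^3K\log(1/\delta)})$ is subsumed, and combining the two displays gives the stated bound.

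I expect the only genuinely delicate point to be the treatment of small $K$, where the optimizer $r_0^\star$ above may exceed $d_{max}$ so that Theorem~\ref{thm:main} cannot be instantiated at $r_0^\star$. In that regime I would instead invoke the trivial bound $\reg(K)\le HK$ (each episode has regret at most $H$, since $V_1^\star\le H$), which lies within $\tilde{O}(H^{5/2}K^{(z_c+1)/(z_c+2)})$ whenever $K=O(H^{3(z_c+2)/2})$; since the range $K=\Omega(cHd_{max}^{-z_c})$ on which $r_0^\star$ is feasible overlaps this, the two cases together cover all $K$. Everything else is routine --- an elementary geometric-sum estimate, a one-variable optimization, and bounding the $\sqrt{H^3 K}$-type term --- so I do not anticipate needing any idea beyond those already used in Theorem~\ref{thm:main}.
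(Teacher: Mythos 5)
Your proposal is correct and follows exactly the route the paper intends (the paper gives no explicit proof of this corollary beyond the phrase ``by optimizing $r_0$''): substitute the zooming-dimension bound $N_r^{\text{pack}}(\Pcal_{h,r}^{Q^\star})\le c\,r^{-z_c}$ into Theorem~\ref{thm:main}, collapse the geometric sum over dyadic scales, balance at $r_0\asymp (Hd_{max}^2/K)^{1/(z_c+2)}$, and absorb the lower-order terms using $1/(z_c+2)\le 1/2$. Your extra care about the non-attained infimum, the scales $r>1$, and the small-$K$ regime where the optimizer exceeds $d_{max}$ is more thorough than the paper itself.
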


Finally, we recover the regret rate of~\citet{slivkins2014contextual}
in the special case of contextual bandits.
\begin{corollary}[Contextual bandits]
\label{cor:cb}
If $H=1$, then Adaptive Q-learning has regret $\tilde{O}\rbr{K^{\frac{z_c+1}{z_c+2}}}$, which recovers the regret rate of~\citet{slivkins2014contextual}.
\end{corollary}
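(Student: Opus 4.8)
The plan is to obtain Corollary~\ref{cor:cb} as the $H=1$ specialization of Theorem~\ref{thm:main} and then to verify that the quantities it produces are exactly the objects appearing in the guarantee of~\citet{slivkins2014contextual}, so that the two rates coincide verbatim. There is no new algorithmic or probabilistic content here: all the work is in Theorem~\ref{thm:main}, and this corollary only requires an $r_0$-optimization followed by careful bookkeeping of constants, plus the observation that Adaptive Q-learning with $H=1$ is nothing but an adaptively discretized Lipschitz contextual bandit algorithm.

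\textbf{Step 1: from Theorem~\ref{thm:main} to a rate in $z_c$.} By the definition of the zooming dimension (Definition~\ref{def:zooming}), for every $h$ and every dyadic scale $r = d_{max}2^{-i} \in (0, d_{max}]$ we may bound $N_r^{\mathrm{pack}}(\Pcal_{h,r}^{Q^\star}) \le c\, r^{-z_c}$ (the infimum in the definition of $z_c$ and the finitely many scales with $r > 1$ only affect constants). Since $z_c + 1 \ge 1$, the dyadic sum $\sum_{r \ge r_0} r^{-(z_c+1)}$ is, up to a constant factor, equal to its largest term $r_0^{-(z_c+1)}$, so
\begin{align*}
\sum_{h=1}^{H}\sum_{r = d_{max}2^{-i},\, r \ge r_0} N_r^{\mathrm{pack}}(\Pcal_{h,r}^{Q^\star})\,\frac{d_{max}}{r} \;=\; O\!\left(\frac{H\, c\, d_{max}}{r_0^{\,z_c+1}}\right).
\end{align*}
Plugging this into Theorem~\ref{thm:main} and balancing the two terms inside the infimum --- that is, taking $r_0$ of order $(H c\, d_{max}^2 / K)^{1/(z_c+2)}$, which lies in $(0, d_{max}]$ once $K$ is large enough (otherwise the $\tilde{O}(\sqrt{H^3 K})$ term already dominates) --- the first group of terms becomes $\tilde{O}(H^{5/2} K^{(z_c+1)/(z_c+2)})$ after absorbing the $H$, $c$, $d_{max}$ factors into the logarithmic/constant terms, and the additive $\tilde{O}(H^2 + \sqrt{H^3 K \log(1/\delta)})$ is of lower order since $\tfrac{1}{2} \le \tfrac{z_c+1}{z_c+2}$ whenever $z_c \ge 0$. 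This is exactly the preceding corollary.

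\textbf{Step 2: set $H=1$ and match~\citet{slivkins2014contextual}.} With $H=1$ the bound of Step~1 collapses to $\reg(K) \le \tilde{O}(K^{(z_c+1)/(z_c+2)})$. When $H=1$ we have $Q_1^\star(x,a) = r(x,a)$ and $V_1^\star(x) = \max_{a'} r(x,a')$, so $\gap_1(x,a) = \max_{a'} r(x,a') - r(x,a)$ is the per-context suboptimality gap and $\reg(K) = \sum_{k}(\max_{a'} r(x_1^k,a') - r_1^k)$ is exactly contextual-bandit regret; likewise the $H=1$ instance of Adaptive Q-learning reduces to an adaptively discretized Lipschitz contextual bandit algorithm. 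The near-optimal set $\Pcal_{1,r}^{Q^\star} = \{(x,a) : \gap_1(x,a) \le (4/d_{max} + 2L)\, r\}$ and the exponent $z_c$ governing the growth of its $r$-packing number are, up to the value of the multiplicative constant relating the gap threshold to $r$, precisely the contextual zooming set and contextual zooming dimension of~\citet{slivkins2014contextual}. Because $z_c$ is an infimum over exponents $d$ for which the polynomial bound $N_r^{\mathrm{pack}}(\Pcal_{h,r}^{Q^\star}) \le c\, r^{-d}$ holds uniformly over $r \in (0, d_{max}]$, replacing that constant by any other positive constant (equivalently, rescaling $r$ by a constant) only changes the admissible prefactor and leaves the infimum --- hence the dimension --- unchanged; switching between packing and covering numbers in the two definitions is likewise immaterial, by the doubling inequalities $N_{2r}^{\mathrm{pack}} \le N_r \le N_r^{\mathrm{pack}}$ together with monotonicity of $\Pcal_{h,r}^{Q^\star}$ in $r$. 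Hence our $z_c$ equals the contextual zooming dimension of~\citet{slivkins2014contextual} and $\tilde{O}(K^{(z_c+1)/(z_c+2)})$ is exactly his rate.

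The step that takes the most care --- and the closest thing to an obstacle --- is this constant-invariance bookkeeping in Step~2: one must confirm that the constant $\tfrac{2(H+1)}{d_{max}} + 2L$ that our analysis bakes into Definition~\ref{def:good_set} does not distort the zooming dimension relative to the constant used by~\citet{slivkins2014contextual}. Everything else is a routine specialization of Theorem~\ref{thm:main}.
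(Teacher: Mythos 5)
Your proposal is correct and follows exactly the route the paper intends: the corollary is stated without a separate proof, being the $H=1$ instance of the preceding zooming-dimension corollary (itself obtained from Theorem~\ref{thm:main} by the same $r_0$-balancing you perform), together with the observation that at $H=1$ the gap, regret, and near-optimal set reduce to their contextual-bandit counterparts. Your Step~2 bookkeeping on the constant in Definition~\ref{def:good_set} is in fact slightly more careful than the paper, which only remarks informally that its constant is comparable to the one in \citet{slivkins2014contextual}.
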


We now turn to the remarks:
\begin{itemize}
\setlength{\itemsep}{1pt}
\setlength{\parsep}{1pt}
\setlength{\parskip}{1pt}
\item Theorem~\ref{thm:main} gives a regret bound that depends on the
  packing numbers of the near-optimal set
  (Definition~\ref{def:good_set}). This bound should be compared with
  the ``metric-specific'' regret guarantee
  of~\citet{sinclair2019adaptive} or the ``refined regret bound''
  of~\citet{touati2020zooming}. Both of these results have the same
  form as ours, but with
  $N_r^{\text{pack}}(\Scal\times\Acal)$ in the place of
  $N_r^{\text{pack}}(\Pcal_{h,r}^{Q^\star})$. As
  $\Pcal_{h,r}^{Q^\star} \subset \Scal\times\Acal$, our bound
  improves on theirs in this sense, at the cost of a $\sqrt{H\Lambda}$ additional dependence.
\item The more-interpretable bound is in terms of the zooming
  dimension (Definition~\ref{def:zooming}), which highlights the
  dependence on the number of episodes $K$. We obtain a regret rate of
  $K^{\frac{z_c+1}{z_c+2}}$ for any constant $c>0$, which should be
  compared with the non-adaptive rate $K^{\frac{d_c+1}{d_c+2}}$ that
  scales with the covering
  dimension~\citep{song2019efficient,sinclair2019adaptive,touati2020zooming}.\footnote{We always treat $c$ as a universal constant, so its dependence in the regret bounds is suppressed.} As
  the zooming dimension can be smaller than covering dimension (recall
  Figure~\ref{fig:zooming_example}), this bound demonstrates a
  polynomial improvement over non-adaptive approaches.
\item Corollary~\ref{cor:cb} shows that our bound recovers the
  guarantee from~\citet{slivkins2014contextual}, although his bound
  does not require that~\eqref{eq:v_lip} holds. We give a more
  detailed explanation on the necessity of~\eqref{eq:v_lip} in
  Section~\ref{sec:analysis}.  Nevertheless, the fact that we
  essentially recover his bound suggests that our results are the
  natural generalization to multi-step RL.
\item Finally, we remark that we can instantiate the result in the
  tabular setting with finite $\Scal,\Acal$ by taking the metric to be
  $\Dcal((x,a),(x',a')) = \one\{(x,a)\ne (x',a')\}$. In this case we
  obtain a ``partial'' gap-dependent bound of the form:
\begin{align*}
\poly(H)\cdot\rbr{\sqrt{|\Scal|K} + \sum_{h=1}^H\sum_{x \in \Scal} \sum_{a: \gap_h(x,a) > 0} \frac{\log(K)}{\gap_h(x,a)}}.
\end{align*}
This is not a fully gap-dependent bound because of the
$\sqrt{|\Scal|K}$ term, but it does recover an intermediate result
of~\citet{simchowitz2019non}. In particular, this confirms that the
model-free methods can achieve a partial gap-dependent guarantee for
the tabular setting.
\end{itemize}

\ifthenelse{\equal{\version}{arxiv}}{\vspace{-0.5em}}{}
\section{Algorithm}
\ifthenelse{\equal{\version}{arxiv}}{\vspace{-0.5em}}{}

As we have mentioned, the algorithm is based on the Adaptive
$Q$-learning algorithm of~\citet{sinclair2019adaptive}. The pseudocode
is presented in Algorithm~\ref{alg:zooming}. The algorithm adaptively
partitions the state-action space to focus on the informative regions,
and uses optimism to explore the space and drive the agent to regions
with high reward. Compared to~\citet{sinclair2019adaptive}, the main
difference is that when a new partition is formed it does not inherit
the value and sample count from its parent. Instead, it will go
through an additional buffering phase before it is activated and used
for action selection.

\begin{algorithm}[t]
\caption{Adaptive $Q$--learning with zooming dimension}\label{alg:zooming}
\begin{algorithmic}[1]
  \State For $h \in [H]$, initialize $\Fcal_h^1 = \emptyset$, $\Pcal_h^1$ to be a $\frac{d_{\max}}{H}$-net of $\Scal\times\Acal$. 
  \State For $B \in \Pcal_h^1$, define $Q_h^1(B) = H$. $\tilde{n}_h^1(B) = n_h^1(B) = 0$.
  \For{each episode $k = 1,2,\ldots,K$}
  \State Receive $x_1^k$.
  \For{stage $h = 1,2,\ldots,H$}
    \State $B_h^k = \argmax_{B\in \relevant_h^k(x_h^k)} Q_h^k(B)$, $\tilde{n}_h^{k+1}(B_h^k) = \tilde{n}_h^{k}(B_h^k) + 1$. \label{lin:play}
    \State Play action $a_h^k$ for some $(x_h^k, a_h^k) \in \dom_h^k(B_h^k)$, receive $r_h^k, x_{h+1}^k$.
    \If{$\tilde{n}_h^{k+1}(B_h^k) \ge N_{\mathrm{split}}(B_h^k)$}
    \If{$B_h^k$ is not split} split $B_h^k$. \label{lin:split}
    \State Create a set of children $C(B_h^k) = \frac{1}{2}r(B_h^k)\text{-net of } \dom_h^k (B_h^k)$.
    \State Set $\Fcal_h^{k+1} = \Fcal_h^{k} \cup C(B_h^k)$.
    \ElsIf{$\tilde{n}_h^{k+1}(B_h^k) \mod (H+1) = 0$} \label{alg:line_mod}
    \State Find $B' \in C(B_h^k)$ such that $(x_h^k, a_h^k) \in B'$ and set $B_h^k = B'$. \label{lin:switch}
    \EndIf
    \EndIf
    \State Update $n_h^{k+1}(B_h^k) = n_h^{k}(B_h^k) + 1$ and set $t=n_h^{k+1}(B_h^k)$.
    \State $V_{h+1}^k(x_{h+1}^k) = \min \cbr{H, \max_{B \in \relevant_{h+1}^k(x_{h+1}^k)} Q_{h+1}^k(B)}$.
    \State $Q_h^{k+1}(B_h^k) = (1-\alpha_t) Q_h^k(B_h^k) + \alpha_t(r_h^k + b_t + V_{h+1}^k(x_{h+1}^k))$. \label{lin:update}
    \If{$B_h^k \in \Fcal_h^k$ and $n_h^{k+1}(B_h^k) \geq N_{\mathrm{min}}(B_h^k)$ }
    \State Move $B_h^k$ from $\Fcal_h^k$ to $\Pcal_h^k$, i.e. $\Fcal_h^{k+1} = \Fcal_h^k \setminus \{B_h^k\}, \Pcal_h^{k+1}=\Pcal_h^k \cup \{B_h^k\}$.
    \State Set $\tilde{n}_h^{k+1}(B_h^k) = n_h^k(B_h^k)$
    \EndIf
  \EndFor
  \State Advance all other algorithm state (i.e., $\Pcal_h^{k+1} \gets \Pcal_h^k$, etc., if not explicitly updated above)
  \EndFor
\end{algorithmic}
\end{algorithm}

During the execution, the algorithm creates many balls $B \subset
\Scal\times\Acal$ for each stage $h$. For stage $h$ and episode $k$,
we use $\Pcal_h^k$ to denote the set of \emph{active} balls, and
$\Fcal_h^k$ to denote the set of \emph{buffering} balls. When a set of
balls are created, they are first moved to the buffering set, and
balls in this set will not be used for decision making, but may
occasionally be updated.

Each ball $B$ is associated with (1) a radius, denoted $r(B)$, (2) a
domain, denoted $\dom_h^k(B)$, (3) several counters and thresholds
related to the amount of data it has seen, and (4) an optimistic
estimate of $Q_h^\star$. The radius of a ball is $r(B):= \diam(B)$ and 
the domain $\dom_h^k(B)$ is the set of points contained in this ball,
but not in any other active ball with a smaller radius. Formally,
\begin{align*}
\dom_h^k(B) \defeq B \setminus \{\cup_{B' \in \Pcal_h^k: r(B') < r(B)} B'\}.
\end{align*}
For the counters, $n_h^k(B)$ denotes the number of times ball $B$ has
been updated at stage $h$ and episode $k$, while $\tilde{n}_h^k(B)$
denotes the number of times ball $B$ has been ``played'' or used for
decision making. These two counters will not be equivalent in
general. We also use two thresholds: $N_{\mathrm{split}}(B)$ is the
number of updates we must perform before we split $B$ into smaller
balls, and $N_{\mathrm{min}}(B)$ is the number of updates we must
perform before moving $B$ from the buffering set $\Fcal_h^k$ to the
active set $\Pcal_h^k$. 
These
latter two are defined as:
\begin{align*}
N_{\mathrm{split}}(B) := 4N_{\mathrm{min}}(B) := \rbr{\frac{d_{\max}}{r(B)}}^2.
\end{align*}
When a ball is split in line~\ref{lin:split} the resulting balls are
called children and denoted $C(B)$.  Finally, each ball maintains a
scalar $Q_h^k(B)$ which serves as an upper bound on $\max_{(x,a) \in
  B} Q_h^\star(x,a)$.

In stage $h$ of episode $k$, we select the action for state $x_h^k$ as
follows: we consider all the smallest active balls that contains $x_h^k$,
defined as ``relevant'' balls
\begin{align*}
\relevant_h^k(x) \defeq \{B \in\Pcal_h^k \mid \exists a, (x,a) \in \dom_h^k(B)\}.
\end{align*}
Among the relevant balls, the algorithm selects the ball $B_h^k$ with
the highest $Q_h^k(B)$ value and plays an arbitrary action such that
$(x_h^k,a) \in \dom_h^k(B_h^k)$.
We almost always update the ball that we play, except sometimes we invoke line~\ref{lin:switch} where we rebind $B_h^k$ to be one of the children. In this case, we play a certain ball but then update its child.
At the end of the episode, we update the estimated $Q$ value $Q_h^k(B_h^k)$ and increment the sample count $t = n_h^k(B_h^k) + 1$.
The update rule is a form of optimistic $Q$ learning
\begin{align*}
Q_h^{k+1}(B_h^k) &= (1-\alpha_t) Q_h^k(B_h^k) + \alpha_t(r_h^k + b_t + V_{h+1}^k(x_{h+1}^k)),\\
V_{h+1}^k(x) &= \min\cbr{H, \max_{B\in \relevant_{h+1}^k(x)} Q_{h+1}^k(B)}.
\end{align*}
where the $\alpha_t$ is the learning rate and $b(t)$ is the bonus added to ensure that $Q_h^k$ is optimistic. Formally,
\begin{align*}
\alpha_t \defeq \frac{H+1}{H+t}, \qquad b_t \defeq 2\sqrt{\frac{H^3 \log(4HK/\delta)}{t}} + \frac{4Ld_{\max}\sqrt{H\Lambda +\Lambda + 1}}{\sqrt{t}}.
\end{align*}
For all other balls at stage $h$, we set $Q_h^{k+1}(B) \gets
Q_h^k(B)$, with no update.

We split a ball $B$ as soon as $n_h^k(B) \geq N_{\mathrm{split}}(B)$.  When splitting, we create a
set of new ``children'' balls with radius $r(B)/2$ that forms an
$r(B)/2$-net of $\dom_h^k(B)$.
These ``children'' are added to the buffering set $\Fcal_h^k$. 
Once a ball $B \in \Fcal_h^k$ receives $N_{\mathrm{min}}(B)$ updates, we move it to the active set $\Pcal_h^k$ and we can use it for action selection. 
This splitting rule leads to the following
invariant:
\begin{lemma}[Lemma 5.3 in~\cite{sinclair2019adaptive}]
For every $(h,k)\in [H]\times[K]$, we have
\begin{enumerate}
\setlength{\itemsep}{0em}
\setlength{\parsep}{0em}
\setlength{\parskip}{0em}
\item (Covering) The domains of balls in $\Pcal_h^k$ covers $\Scal \times \Acal$.
\item (Separation) For any two balls of radius $r$, their centers are at distance at least $r$.
\end{enumerate}
\end{lemma}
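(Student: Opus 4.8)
Both parts are structural invariants of the adaptive partition, so the natural route is induction on the episode index $k$ for each fixed stage $h$, tracking the only event that modifies the partition: the splitting step in line~\ref{lin:split}. For a fixed $h$, at most one ball is split during episode $k$, namely the selected ball $B \defeq B_h^k$ of radius $r \defeq r(B_h^k)$; when this happens $\Pcal_h^{k+1} = \Pcal_h^{k} \cup \Bcal_h^{k}$, where (following the description in the text) every ball of $\Bcal_h^k$ has radius $r/2$ and $\Bcal_h^k$ is an $(r/2)$-net of $\dom_h^{k}(B)$; otherwise $\Pcal_h^{k+1} = \Pcal_h^{k}$ and there is nothing to prove. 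Throughout, I will use the obvious monotonicity that balls are only added to $\Pcal_h^k$, never removed, and that children inherit a count below their own splitting threshold, so no cascade of splits occurs.

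\emph{Covering.} Here I would argue directly rather than by induction. Since balls are only added and $\Pcal_h^1 = \{B_h\}$ with $B_h$ of radius $d_{max} = \diam(\Scal\times\Acal)$, Assumption~\ref{assum:diameter} gives $\Scal\times\Acal \subseteq \bigcup_{B \in \Pcal_h^{k}} B$ for every $k$. Fix any $p \in \Scal\times\Acal$ and let $B^\star \in \Pcal_h^{k}$ be a ball of \emph{minimal} radius among those containing $p$ (nonempty by the previous sentence). By minimality, no ball of $\Pcal_h^{k}$ of radius strictly less than $r(B^\star)$ contains $p$, so unwinding the definition $\dom_h^{k}(B^\star) = B^\star \setminus \bigcup_{B' \in \Pcal_h^k : r(B') < r(B^\star)} B'$ yields $p \in \dom_h^{k}(B^\star)$. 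Hence the domains cover $\Scal\times\Acal$. (The only mild point is the open-ball convention for $B_h$; this is resolved by reading the radius-$d_{max}$ initial ball as containing all of $\Scal\times\Acal$, consistent with the initialization.)

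\emph{Separation.} I would induct on $k$. The base case $k=1$ is vacuous since $|\Pcal_h^1|=1$. For the inductive step, assume $B = B_h^k$ is split (otherwise apply the hypothesis) and take two distinct balls $B_1,B_2 \in \Pcal_h^{k+1}$ of a common radius $\rho$, with centers $c_1,c_2$. If both lie in $\Pcal_h^{k}$, the inductive hypothesis gives $\Dcal(c_1,c_2)\ge\rho$. If both are children in $\Bcal_h^{k}$, then $\rho = r/2$ and $c_1,c_2$ are distinct points of an $(r/2)$-net, hence of an $(r/2)$-packing, so $\Dcal(c_1,c_2)\ge r/2 = \rho$ by definition. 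In the remaining case, $B_1 \in \Pcal_h^{k}$ and $B_2 \in \Bcal_h^{k}$ (or vice versa): then $\rho = r(B_2) = r/2$, so $r(B_1) = r/2 < r$; since $c_2 \in \dom_h^{k}(B)$, the definition of the domain forces $c_2 \notin B'$ for every $B' \in \Pcal_h^k$ with $r(B') < r$, in particular $c_2 \notin B_1 = B(c_1, r/2)$, i.e.\ $\Dcal(c_1,c_2) \ge r/2 = \rho$. Balls of any radius other than $r/2$ are untouched by the split (covered by the hypothesis), and $B$ itself cannot be paired with a child since $r \ne r/2$, so this exhausts the cases and closes the induction.

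\emph{Where the work is.} The argument is almost entirely bookkeeping, and the covering part is essentially free once one notices the monotonicity of $\Pcal_h^k$. The one genuinely load-bearing observation is that a freshly created child's center lies in the \emph{domain} of its parent, which by construction is disjoint from every pre-existing ball of strictly smaller radius; this is precisely what delivers separation between a new child and an old ball of the same (halved) radius. Everything else is unwinding the definitions of $\dom$, net, and packing, so I do not expect a substantive obstacle.
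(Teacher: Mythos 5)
Your proof is correct, and it follows the same induction-on-splits argument used in the cited source (this paper itself states the lemma without proof, deferring to Lemma 5.3 of Sinclair et al.). The two load-bearing observations — that a point's minimal-radius containing ball witnesses the covering claim, and that a child's center lies in $\dom_h^k(B)$ and hence outside every pre-existing ball of strictly smaller radius — are exactly the right ones, and your case analysis for separation is complete.
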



The last component to describe is the warm-starting process for balls in the buffering phase, which is the main difference compared with the algorithm of~\citet{sinclair2019adaptive}.
This process works as follows. A ball $B$ with $n_h^k \geq
N_{\mathrm{split}}(B)$ updates may still be chosen for action
selection if some of its children are still buffering (if no child is
buffering, then, by the definition of $\dom_h^k$, $B$ cannot be
selected). During this phase, every $H+1$ times that we play ball $B$,
we instead use the sample to update one of the children, specifically
the one that contains $(x_h^k,a_h^k)$. In this way, balls in the
buffering set $\Fcal_h^k$ slowly accumulate samples and eventually can
be moved to the active set.

\ifthenelse{\equal{\version}{arxiv}}{\vspace{-0.5em}}{}
\section{Proof sketch}\label{sec:analysis}
\ifthenelse{\equal{\version}{arxiv}}{\vspace{-0.5em}}{}

In this section we describe the main steps of the proof, with details
deferred to the appendix.

It is worth reviewing prior regret analyses for episodic RL~\citep{jin2018q}. The arguments establish a regret
decomposition that relates the estimate $V_1^k$ to $V_1^{\pi_k}$, the
expected reward collected in episode $k$. The decomposition is
recursive in nature, involving differences between $Q_h^k$ and
$Q_h^\star$. These are controlled by the update rule and the design of
the learning rate. In particular, we can bound $Q_h^k - Q_h^\star$ by
an immediate ``surplus'' $\beta_t$ and the downstream value function
error. Formally for any ball $B$ with $(x,a) \in \dom_h^k(B)$
\begin{align}
\label{eq:old_q_bd}
Q_h^k(B) - Q_h^\star(x,a) \leq \one_{[t=0]}H + \sum_{i=1}^t\alpha_t^i(V_{h+1}^{k_i} - V_{h+1}^\star)(x_{h+1}^{k_i}) + \beta_t,
\end{align}
where $t = n_h^k(B), \alpha_i^t = \alpha_i
\prod_{j=i+1}^t(1-\alpha_j)$ and $\beta_t =
2\sum_{i=1}^t\alpha_i^tb_i$. Here $k_i$ is the index of the episode
where $B$ was updated for the $i^{\textrm{th}}$ time. Summing over
all episodes and grouping terms appropriately (and ignoring the buffering process for now), we obtain
\begin{align*}
\sum_{k=1}^K (V_h^k - V_h^{\pi_k})(x_h^k) \leq \sum_{k=1}^K\rbr{H\one_{[n_h^k=0]} + \beta_{n_h^k} + \xi_h^k} + \rbr{1 + \nicefrac{1}{H}}\sum_{k=1}^K\rbr{V_{h+1}^k - V_{h+1}^{\pi_k}}(x_{h+1}^k),
\end{align*}
where $\xi_{h+1}^k$ is a stochastic term that can be ignored for this
discussion. Note that, as long as $V_h^k$ is optimistic (which we will
verify), this also provides a bound on the regret.

For the tabular setting,~\citet{jin2018q} use this regret
decomposition to obtain a worst-case bound. The leading term arises
from the ``surplus'' term $\beta_{n_h^k}$, which leads to a
$\poly(H)\sqrt{SAK}$ regret bound for the tabular setting. On the
other hand for our setting, the splitting rule and the buffering scheme implies that, for any ball $B$, we must have $n_h^k \leq (H\Lambda + \Lambda+1)\rbr{\nicefrac{d_{\max}}{r(B)}}^2$, as we will show.
We can use this to obtain a bound
that depends on the number of active balls at each scale $r$ times
$d_{\max}/r$. If we could bound the number of active balls at scale
$r$ in terms of the packing number
$N_r^{\text{pack}}(\Pcal_{h,r}^{Q^\star})$, then we would obtain the
instance-dependent bound.

Unfortunately, this is not possible. In general, the algorithm will
activate balls outside of the near-optimal region, because we may have
to select a highly suboptimal ball many times to reduce downstream
over-estimation error. So indeed the number of active balls at scale
$r$ could be much larger than the packing of the near-optimal set.


We address this with the following key observation. If the surplus
$\beta_{n_h^k}$ is small compared to gap, and we choose this ball, it
must be the case that the downstream regret is quite large, otherwise
we would not have chosen this ball. If this is true, we can account
for the surplus by adding a small constant fraction of the future
regret. In otherwords, we can ``clip'' the surplus to zero once it is
proportional to the gap, and we only pay a constant factor in the
recursive term.  This is the clipping trick developed
by~\citet{simchowitz2019non} to establish gap dependent bounds for
tabular MDP. Formally instead of~\pref{eq:old_q_bd}, we have the
following lemma.

\begin{lemma}[Clipped upper bound]\label{lem:clipupper}
For any $\delta \in (0,1)$ with probability at least $1-\delta/2$,
$\forall h\in [H]$,
\begin{align*}
Q_h^k(B_h^k) - Q_h^{\star}(x_h^k, a_h^k) &\le \rbr{1+\nicefrac{1}{H}}\rbr{H\one_{[t=0]} + \sum_{i=1}^{t} \alpha_t^i(V_{h+1}^{k_i} - V^\star_{h+1})(x_{h+1}^{k_i})} \\
& ~~~~~~~~ + \clip\sbr{\beta_t \mid \frac{\gap_h(x_h^k, a_h^k)}{H+1}},
\end{align*}
where $t = n_h^k(B), \alpha_i^t = \alpha_i
\prod_{j=i+1}^t(1-\alpha_j)$ and $\beta_t =
2\sum_{i=1}^t\alpha_i^tb_i$ and $\clip[\mu\mid\nu] \defeq \mu\one\{\mu \geq \nu\}$.
\end{lemma}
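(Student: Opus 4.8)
The plan is to start from the non-clipped bound \eqref{eq:old_q_bd}, which gives us control of $Q_h^k(B_h^k) - Q_h^\star(x_h^k,a_h^k)$ in terms of the indicator $\one_{[t=0]}H$, the downstream error $\sum_{i=1}^t \alpha_t^i (V_{h+1}^{k_i} - V_{h+1}^\star)(x_{h+1}^{k_i})$, and the surplus $\beta_t$. We need to verify this bound holds on a high-probability event; this is the martingale/optimism argument of \citet{jin2018q} adapted to the discretized balls, and I would invoke it as known from the referenced prior analysis, being careful that the ball $B_h^k$ selected in episode $k$ plays the role of the ``state-action'' and that the learning rate/bonus schedule matches. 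The stochastic deviation term that separates the true next-state value error from its conditional expectation is absorbed into the $1-\delta/2$ event via Azuma--Hoeffding, using $\sum_i (\alpha_t^i)^2 \le 2H/t$ and the boundedness of $V$ by $H$.

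The heart of the argument is the clipping step: we want to replace the bare $\beta_t$ by $\clip[\beta_t \mid \gap_h(x_h^k,a_h^k)/(H+1)]$ at the cost of inflating the other two terms by the factor $(1+1/H)$. I would split into two cases according to whether $\beta_t \ge \gap_h(x_h^k,a_h^k)/(H+1)$ or not. In the first case $\clip[\beta_t\mid \cdot] = \beta_t$, so the claimed bound is weaker than \eqref{eq:old_q_bd} and there is nothing to prove. In the second case $\clip[\beta_t\mid\cdot] = 0$, so we must show that $Q_h^k(B_h^k) - Q_h^\star(x_h^k,a_h^k)$ is already dominated by $(1+1/H)$ times the first two terms — i.e. we must ``charge'' $\beta_t$ to the downstream error. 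The key inequality to establish is
\[
\beta_t \;\le\; \tfrac{1}{H}\rbr{\one_{[t=0]}H + \sum_{i=1}^t \alpha_t^i (V_{h+1}^{k_i} - V_{h+1}^\star)(x_{h+1}^{k_i})} \;+\; \tfrac{\gap_h(x_h^k,a_h^k)}{H+1}\cdot(\text{something}\le 1),
\]
wait — more precisely, in this case I would argue: since $Q_h^k(B_h^k) \ge Q_h^\star(x_h^k,a_h^k)$ would only be needed in one direction, the relevant bound is that optimism plus the selection rule forces the downstream error to be large. Concretely, $V_h^k(x_h^k) = \min\{H, Q_h^k(B_h^k)\}$ and $V_h^\star(x_h^k) = Q_h^\star(x_h^k, \pi_h^\star(x_h^k))$; combining $Q_h^k(B_h^k) - Q_h^\star(x_h^k,a_h^k) \ge$ (the expression with $\beta_t$ present) with $Q_h^\star(x_h^k,a_h^k) = V_h^\star(x_h^k) - \gap_h(x_h^k,a_h^k)$ and optimism of $V_{h+1}^k$, one rearranges to show that if $\beta_t$ were not clipped it would be accounted for by a $1/H$ fraction of the downstream nonnegative error plus the $\gap/(H+1)$ slack; since $\beta_t < \gap_h/(H+1)$ in this case, the surplus is smaller than this slack and can simply be dropped, at the cost of the $(1+1/H)$ multiplier on the (nonnegative) remaining terms. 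The algebra here is the bookkeeping analogue of Lemma B.3 in \citet{simchowitz2019non}; I would mirror that manipulation, using $\beta_t \le \tfrac{1}{H}\cdot\big(\text{downstream error}\big) + \clip[\beta_t\mid\gap_h/(H+1)]$ as the clean one-line statement and then multiplying through.

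The main obstacle I anticipate is that, unlike the tabular case, the ``state-action'' here is an adaptively-chosen ball $B_h^k$ whose domain changes over episodes due to splitting, and the gap $\gap_h(x_h^k,a_h^k)$ refers to the actual played pair, not to the ball. So I must make sure the clipping threshold is stated in terms of the played pair (it is) and that inheriting counts/$Q$-values across splits does not break \eqref{eq:old_q_bd} — this requires noting that the surplus and downstream-error accounting is indexed by the episodes $k_1,\dots,k_t$ in which $B$ or its ancestors were selected, so the recursion passes through splits unchanged. A secondary subtlety is that the bonus $b_t$ now contains the extra $4Ld_{\max}/\sqrt{t}$ term from the discretization; I need $\beta_t$ (which aggregates the $b_i$'s) to still satisfy the same kind of clipping identity, which it does because the additional term is nonnegative and has the same $1/\sqrt{t}$ decay, so it only changes constants. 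Once \eqref{eq:old_q_bd} is in hand on the good event, the two-case split and the rearrangement give the lemma directly; no new probabilistic content is needed beyond what justifies \eqref{eq:old_q_bd}.
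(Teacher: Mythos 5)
Your proposal follows essentially the same route as the paper: start from the unclipped bound (Lemma E.7 of Sinclair et al.), split on whether $\beta_t$ exceeds $\gap_h(x_h^k,a_h^k)/(H+1)$, and in the nontrivial case use optimism together with the greedy ball-selection rule to bound the gap from above by the surplus plus the downstream error, so that $\beta_t \le \gap_h(x_h^k,a_h^k)/(H+1)$ can be absorbed into a $\nicefrac{1}{H}$ fraction of the remaining (nonnegative) terms. One sentence of your sketch states an inequality in the wrong direction ($Q_h^k(B_h^k)-Q_h^\star \ge$ the expression containing $\beta_t$), but the surrounding discussion makes clear you intend the correct chain $\gap_h(x_h^k,a_h^k)\le Q_h^k(B_h^k)-Q_h^\star(x_h^k,a_h^k)\le \one_{[t=0]}H+\beta_t+\sum_{i=1}^{t}\alpha_t^i(V_{h+1}^{k_i}-V_{h+1}^\star)(x_{h+1}^{k_i})$, which is exactly the paper's argument.
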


This bound should be compared with~\eqref{eq:old_q_bd}. On one hand
the recursive term is multiplied by $1+\nicefrac{1}{H}$, but, on the
other, we are able to clip the surpluses $\beta_t$. The former will
exponentiate but will asymptote to $e$, while the latter is crucial
for our instance dependent bounds.

Using this lemma, we can bound the difference between $V_h^k$ and
$V_h^{\pi_k}$.
\begin{lemma}[Clipped recursion, informal]\label{lem:cliprecur_main}
For any $\delta \in (0,1)$, with probability at least $1-\delta/2$, $\forall h\in [H]$,
\begin{align*}
\sum_{k=1}^K (V_h^k - V_h^{\pi^k})(x_h^k) \le & \sum_{k=1}^K \rbr{1+\nicefrac{1}{H}} \rbr{H\one_{[n_h^k(B_h^k) = 0]} + \clip\sbr{\beta_{n_h^k(B_h^k)} \mid \gap_h(x_h^k, a_h^k)/(H+1)} + \xi_{h+1}^k} \\
&+ \rbr{1+\nicefrac{2}{H}}^3 \sum_{k=1}^K (V_{h+1}^k - V_{h+1}^{\pi^k})(x_{h+1}^k),
\end{align*}
where $\xi_{h+1}^k$ is conditionally centered random variable with
range $H$. 
\end{lemma}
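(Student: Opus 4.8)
The plan is to prove the recursion stage by stage, peeling stage $h$ off the sum $\sum_k \delta_h^k := \sum_k (V_h^k - V_h^{\pi^k})(x_h^k)$. First I would use the selection rule $B_h^k = \argmax_{B\in\relevant_h^k(x_h^k)}Q_h^k(B)$, the fact that $a_h^k = \pi^k(x_h^k)$ (so $V_h^{\pi^k}(x_h^k) = Q_h^{\pi^k}(x_h^k,a_h^k)$), and $V_h^k(x_h^k)\le Q_h^k(B_h^k)$ to write $\delta_h^k \le Q_h^k(B_h^k) - Q_h^{\pi^k}(x_h^k,a_h^k)$, and split this as $[Q_h^k(B_h^k) - Q_h^\star(x_h^k,a_h^k)] + [Q_h^\star - Q_h^{\pi^k}](x_h^k,a_h^k)$. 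The second bracket equals $\EE[(V_{h+1}^\star - V_{h+1}^{\pi^k})(x_{h+1})\mid x_h^k,a_h^k]$, which I rewrite as $(V_{h+1}^\star - V_{h+1}^{\pi^k})(x_{h+1}^k) + \xi_{h+1}^k$, where $\xi_{h+1}^k$ is the increment obtained by replacing the conditional expectation with the realized next state; since $V_{h+1}^k$ and $V_{h+1}^{\pi^k}$ are (essentially) determined by the history through $(x_h^k,a_h^k)$ and take values in $[0,H]$, this is conditionally centered with range $O(H)$. The first bracket is bounded using the clipped upper bound of Lemma~\ref{lem:clipupper}.

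The delicate step is the sum over episodes. Substituting Lemma~\ref{lem:clipupper} and using optimism of $V_{h+1}^k$ (which I would establish separately as an invariant maintained by the bonus $b_t$, exactly as in~\citet{sinclair2019adaptive}) so that the downstream terms are nonnegative, summing over $k$ produces: the clipped surplus term $\sum_k \clip[\beta_{n_h^k}\mid \gap_h(x_h^k,a_h^k)/(H+1)]$; a boundary term $\sum_k (1+\tfrac{1}{H})\alpha_{n_h^k}^0 H$ (nonzero only when $n_h^k=0$); the recursive contribution $(1+\tfrac1H)\sum_k\sum_{i=1}^{n_h^k}\alpha_{n_h^k}^i (V_{h+1}^{k_i}-V_{h+1}^\star)(x_{h+1}^{k_i})$; the term $\sum_k(V_{h+1}^\star - V_{h+1}^{\pi^k})(x_{h+1}^k)$; and $\sum_k\xi_{h+1}^k$. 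For the double sum I would invoke the weight-accumulation identity $\sum_{t\ge i}\alpha_t^i \le 1+\tfrac1H$ to reorganize it as $(1+\tfrac1H)\sum_{k'}(V_{h+1}^{k'}-V_{h+1}^\star)(x_{h+1}^{k'})$. This is precisely where a careful bookkeeping over the ball hierarchy is required: because a ball inherits its parent's counter on splitting, the visit sequence indexing a child's learning-rate weights extends its parent's as a prefix, so one must argue that each downstream error $(V_{h+1}^{k'}-V_{h+1}^\star)(x_{h+1}^{k'})$ is charged with total weight at most $1+\tfrac1H$ rather than once per descendant of $B_h^{k'}$. I expect this accounting to be the main obstacle.

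To close the recursion with the tight constant, write $\phi_{h+1}^k := (V_{h+1}^k - V_{h+1}^\star)(x_{h+1}^k)\ge 0$ and $\delta_{h+1}^k := (V_{h+1}^k-V_{h+1}^{\pi^k})(x_{h+1}^k)$. After the reorganization above, the recursive contribution plus the $(V_{h+1}^\star - V_{h+1}^{\pi^k})(x_{h+1}^k)$ term becomes $(1+\tfrac1H)^2\sum_k\phi_{h+1}^k + \sum_k(\delta_{h+1}^k - \phi_{h+1}^k)$, where the factor $(1+\tfrac1H)^2$ combines the $(1+\tfrac1H)$ from Lemma~\ref{lem:clipupper} with the $(1+\tfrac1H)$ from the weight sum. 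Since $0\le\phi_{h+1}^k\le\delta_{h+1}^k$ (using $V_{h+1}^\star\ge V_{h+1}^{\pi^k}$) and the coefficient $(1+\tfrac1H)^2-1$ is nonnegative, this is at most $(1+\tfrac1H)^2\sum_k\delta_{h+1}^k$. Collecting the clipped surplus term, the boundary term, and the martingale increments $\xi_{h+1}^k$ into the first sum yields the stated inequality. The remaining ingredients — the optimism invariant, the precise weight-accumulation constant, and the measurability and range claims for $\xi_{h+1}^k$ — are routine adaptations of~\citet{jin2018q,sinclair2019adaptive,simchowitz2019non}; the one genuinely new step beyond Lemma~\ref{lem:clipupper} is the hierarchy bookkeeping described in the previous paragraph.
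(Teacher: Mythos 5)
Your proposal is correct and follows essentially the same route as the paper's proof of Lemma~\ref{lem:cliprecur}: the same split of $Q_h^k(B_h^k)-Q_h^{\pi^k}(x_h^k,a_h^k)$ through $Q_h^\star$, the clipped upper bound for the first piece, the martingale rewriting of the second, the reindexing of the double sum via Lemma~\ref{lem:lr} (which, as you note, needs optimism so the downstream terms are nonnegative), and the final absorption using $V_{h+1}^{\pi^k}\le V_{h+1}^\star$. The hierarchy bookkeeping you flag is indeed the one delicate point, and the paper is no more explicit about it than you are (it simply invokes the observation of \citet{jin2018q} and \citet{song2019efficient}); be aware that the prefix-inheritance fact alone does not preclude a fixed episode $k'$ being charged through several sibling branches whose counters take equal values, so a further argument about the ball tree is needed to certify the total weight $1+\nicefrac{1}{H}$.
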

We bound $V_1^k - V_1^{\pi_k}$, and by optimism the regret, by
applying Lemma~\ref{lem:cliprecur_main} recursively.

The last step is to show that the sum of clipped surpluses can be
related to the zooming dimension. First note that for any ball $B$, the buffering process implies that it is updated at least 
$\nicefrac{1}{4}\rbr{\nicefrac{d_{\max}}{r(B)}}^2$ times before it
becomes activated. If it becomes activated but only contains points
with large gap, we can always clip the surplus term. Thus all active
balls $B$ that have $r(B) \ll \min_{x,a \in B}\gap(x,a)$ do not
contribute to the regret.

Next, if a ball with radius $r$ contains a point where the gap is
small, we cannot appeal to clipping. However, by Lipschitzness, all
points in the ball must have small gaps, which means that this ball is
contained in the near optimal set at scale $r$. As above, the surplus for
each of these balls contributes at most $\nicefrac{d_{\max}}{r}$ to
the regret. Then, since all balls with radius $r$ are at least $r$
apart and we only incur regret for those entirely contained in the
near-optimal region, we obtain the bound that depends on
$N_r^{\text{pack}}(\Pcal_{h,r}^{Q^\star})$.

\ifthenelse{\equal{\version}{arxiv}}{\vspace{-1em}}{}
\paragraph{Remarks on Assumption~\ref{assum:lipschitz}.}
We give some intuition on why our proof
requires~\eqref{eq:v_lip}, which is slightly stronger than what is
required for the zooming dimension analysis
of~\citet{slivkins2014contextual} for contextual bandits.
In~\citet{slivkins2014contextual}, the optimistic selection rule
ensures that the context-action pairs chosen by the algorithm have
small gap, but this is not true in the multi-step setting. In the RL
setting, we might select an action (in a ball) with a large gap
because the downstream regret is large. In this case, we can clip the
surplus, but we can only clip at the \emph{minimum} gap among all
$(x,a)$ pairs in the ball. To obtain a zooming dimension bound, we
must argue that this ball is contained in the near-optimal set, but
this requires that the value functions, and hence the gaps, are
Lipschitz. We recall that~\eqref{eq:v_lip} is implied
by~\eqref{eq:q_lip} if the metric is sub-additive.

\ifthenelse{\equal{\version}{arxiv}}{\vspace{-0.5em}}{}
\section{Discussion}
\ifthenelse{\equal{\version}{arxiv}}{\vspace{-0.5em}}{}

In this paper, we give a refined analysis of a variant of the Adaptive
Q-learning algorithm of Sinclair, Banerjee and Yu (2019) for sample
efficient reinforcement learning in metric spaces. We show that the
algorithm has a regret bound that depends on the zooming dimension of
the instance, with rate $K^{\frac{z+1}{z+2}}$ when the zooming
dimension is $z$. This improves on the worst-case bound that depends
on the covering dimension, and can be much better when the $Q^\star$
function concentrates quickly onto a low-dimensional set of
actions. The bound also recovers that
of~\citet{slivkins2014contextual} for contextual bandits in metric
spaces, under a slightly stronger assumption.  The key technique is
the clipped regret decomposition of~\citet{simchowitz2019non}, which
we complement with a novel buffering process and a corresponding
book-keeping argument. Our results show that adaptivity to benign
instances is possible in RL with metric spaces, and partially mitigate
the curse of dimensionality in such settings.

\subsection*{Acknowledgements}
We thank Wen Sun and Aleksandrs Slivkins for formative discussions
during the conception of this paper. We also thank Max Simchowitz for
insightful discussions regarding the clipping technique. Finally, we
thank Chicheng Zhang for identifying the error in the previous version
of the paper and for helpful discussions regarding the fix. 

\bibliography{../refs}

\begin{thebibliography}{18}
\providecommand{\natexlab}[1]{#1}
\providecommand{\url}[1]{\texttt{#1}}
\expandafter\ifx\csname urlstyle\endcsname\relax
  \providecommand{\doi}[1]{doi: #1}\else
  \providecommand{\doi}{doi: \begingroup \urlstyle{rm}\Url}\fi

\bibitem[Auer et~al.(2007)Auer, Ortner, and Szepesv{\'a}ri]{auer2007improved}
Peter Auer, Ronald Ortner, and Csaba Szepesv{\'a}ri.
\newblock Improved rates for the stochastic continuum-armed bandit problem.
\newblock In \emph{Conference on Learning Theory}, 2007.

\bibitem[Azar et~al.(2017)Azar, Osband, and Munos]{azar2017minimax}
Mohammad~Gheshlaghi Azar, Ian Osband, and R{\'e}mi Munos.
\newblock Minimax regret bounds for reinforcement learning.
\newblock In \emph{International Conference on Machine Learning}, 2017.

\bibitem[Bubeck et~al.(2011)Bubeck, Munos, Stoltz, and
  Szepesv{\'a}ri]{bubeck2011x}
S{\'e}bastien Bubeck, R{\'e}mi Munos, Gilles Stoltz, and Csaba Szepesv{\'a}ri.
\newblock X-armed bandits.
\newblock \emph{Journal of Machine Learning Research}, 2011.

\bibitem[Dann et~al.(2017)Dann, Lattimore, and Brunskill]{dann2017unifying}
Christoph Dann, Tor Lattimore, and Emma Brunskill.
\newblock Unifying pac and regret: Uniform {PAC} bounds for episodic
  reinforcement learning.
\newblock In \emph{Advances in Neural Information Processing Systems}, 2017.

\bibitem[Jin et~al.(2018)Jin, Allen-Zhu, Bubeck, and Jordan]{jin2018q}
Chi Jin, Zeyuan Allen-Zhu, Sebastien Bubeck, and Michael~I Jordan.
\newblock Is {Q}-learning provably efficient?
\newblock In \emph{Advances in Neural Information Processing Systems}, 2018.

\bibitem[Kakade et~al.(2003)Kakade, Kearns, and
  Langford]{kakade2003exploration}
Sham Kakade, Michael~J Kearns, and John Langford.
\newblock Exploration in metric state spaces.
\newblock In \emph{International Conference on Machine Learning}, 2003.

\bibitem[Kleinberg et~al.(2019)Kleinberg, Slivkins, and
  Upfal]{kleinberg2019bandits}
Robert Kleinberg, Aleksandrs Slivkins, and Eli Upfal.
\newblock Bandits and experts in metric spaces.
\newblock \emph{Journal of the ACM}, 2019.

\bibitem[Krishnamurthy et~al.(2019)Krishnamurthy, Langford, Slivkins, and
  Zhang]{krishnamurthy2019contextual}
Akshay Krishnamurthy, John Langford, Aleksandrs Slivkins, and Chicheng Zhang.
\newblock Contextual bandits with continuous actions: Smoothing, zooming, and
  adapting.
\newblock In \emph{Conference on Learning Theory}, 2019.

\bibitem[Ni et~al.(2019)Ni, Yang, and Wang]{yang2019learning}
Chengzhuo Ni, Lin~F Yang, and Mengdi Wang.
\newblock Learning to control in metric space with optimal regret.
\newblock In \emph{Allerton Conference on Communication, Control, and
  Computing}, 2019.

\bibitem[Ortner(2013)]{ortner2013adaptive}
Ronald Ortner.
\newblock Adaptive aggregation for reinforcement learning in average reward
  markov decision processes.
\newblock \emph{Annals of Operations Research}, 2013.

\bibitem[Ortner and Ryabko(2012)]{ortner2012online}
Ronald Ortner and Daniil Ryabko.
\newblock Online regret bounds for undiscounted continuous reinforcement
  learning.
\newblock In \emph{Advances in Neural Information Processing Systems}, 2012.

\bibitem[Simchowitz and Jamieson(2019)]{simchowitz2019non}
Max Simchowitz and Kevin~G Jamieson.
\newblock Non-asymptotic gap-dependent regret bounds for tabular {MDP}s.
\newblock In \emph{Advances in Neural Information Processing Systems}, 2019.

\bibitem[Sinclair et~al.(2019)Sinclair, Banerjee, and Yu]{sinclair2019adaptive}
Sean~R Sinclair, Siddhartha Banerjee, and Christina~Lee Yu.
\newblock Adaptive discretization for episodic reinforcement learning in metric
  spaces.
\newblock \emph{ACM Conference on Measurement and Analysis of Computing
  Systems}, 2019.

\bibitem[Slivkins(2014)]{slivkins2014contextual}
Aleksandrs Slivkins.
\newblock Contextual bandits with similarity information.
\newblock \emph{Journal of Machine Learning Research}, 2014.

\bibitem[Song and Sun(2019)]{song2019efficient}
Zhao Song and Wen Sun.
\newblock Efficient model-free reinforcement learning in metric spaces.
\newblock \emph{arXiv:1905.00475}, 2019.

\bibitem[Touati et~al.(2020)Touati, Taiga, and Bellemare]{touati2020zooming}
Ahmed Touati, Adrien~Ali Taiga, and Marc~G Bellemare.
\newblock Zooming for efficient model-free reinforcement learning in metric
  spaces.
\newblock \emph{arXiv:2003.04069}, 2020.

\bibitem[Valko et~al.(2013)Valko, Carpentier, and Munos]{valko2013stochastic}
Michal Valko, Alexandra Carpentier, and R{\'e}mi Munos.
\newblock Stochastic simultaneous optimistic optimization.
\newblock In \emph{International Conference on Machine Learning}, 2013.

\bibitem[Zanette and Brunskill(2019)]{zanette2019tighter}
Andrea Zanette and Emma Brunskill.
\newblock Tighter problem-dependent regret bounds in reinforcement learning
  without domain knowledge using value function bounds.
\newblock In \emph{International Conference on Machine Learning}, 2019.

\end{thebibliography}
\vfill

\clearpage

\appendix
\section{Appendix}

In this section we provide a detailed proof for the main
theorem. First we state some facts about the learning rate and the
algorithm. The first lemma regarding the learning rate sequence is
directly from~\citet{jin2018q}.

\begin{lemma}[Lemma 4.1 from \cite{jin2018q}]\label{lem:lr}
Let $\alpha_t^i := \alpha_i \prod_{j=i+1}^t (1-\alpha_j)$. Then for every $i\geq 1$:
\begin{align*}
\sum_{t=i}^{\infty} \alpha_t^i = 1+\frac{1}{H}.
\end{align*}
\end{lemma}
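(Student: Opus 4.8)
The plan is to prove the identity by an explicit evaluation of the product $\prod_{j=i+1}^t(1-\alpha_j)$ followed by a telescoping sum. First I would substitute $\alpha_j = \frac{H+1}{H+j}$, so that $1-\alpha_j = \frac{j-1}{H+j}$, and the product over $j = i+1,\dots,t$ collapses into a ratio of factorials, $\prod_{j=i+1}^t(1-\alpha_j) = \frac{(t-1)!\,(H+i)!}{(i-1)!\,(H+t)!}$. Multiplying by $\alpha_i = \frac{H+1}{H+i}$ then gives the closed form
\[
\alpha_t^i = (H+1)\,\frac{\prod_{m=i}^{i+H-1}m}{\prod_{m=t}^{t+H}m},
\]
where the numerator is a product of $H$ consecutive integers and the denominator a product of $H+1$ consecutive integers.

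The second step is the elementary telescoping identity
\[
\frac{1}{\prod_{m=t}^{t+H}m} \;=\; \frac1H\biggl(\frac{1}{\prod_{m=t}^{t+H-1}m} - \frac{1}{\prod_{m=t+1}^{t+H}m}\biggr),
\]
which follows by putting the right-hand side over a common denominator and using $(t+H)-t = H$. Summing $\alpha_t^i$ over $t \ge i$ then telescopes: every term cancels except the first, $\frac{1}{\prod_{m=i}^{i+H-1}m}$ (the tail vanishes because $\prod_{m=t}^{t+H-1}m \to \infty$ as $t\to\infty$, which simultaneously justifies absolute convergence of the series). This leaves $\sum_{t=i}^\infty \alpha_t^i = \frac{H+1}{H}\cdot\prod_{m=i}^{i+H-1}m \cdot \frac{1}{\prod_{m=i}^{i+H-1}m} = 1+\frac1H$, as claimed.

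There is no genuine obstacle here; the only things to watch are the index bookkeeping when collapsing the products into factorials (avoiding off-by-one errors in the ranges $i+1,\dots,t$ versus $i,\dots,i+H-1$) and spotting the partial-fraction/telescoping identity for $\tfrac{1}{t(t+1)\cdots(t+H)}$. An alternative, slightly slicker route would be to set $f(i) \defeq \sum_{t\ge i}\alpha_t^i$, use the recursion $\alpha_t^i = \frac{\alpha_i(1-\alpha_{i+1})}{\alpha_{i+1}}\,\alpha_t^{i+1}$ for $t\ge i+1$ together with $\alpha_i^i = \alpha_i$ to obtain $f(i) = \alpha_i + \frac{\alpha_i(1-\alpha_{i+1})}{\alpha_{i+1}}\,f(i+1)$, and then check that $f \equiv 1+\frac1H$ is the unique bounded solution; but this still requires a convergence/uniqueness argument, so I would favor the direct telescoping computation above.
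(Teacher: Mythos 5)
Your proof is correct. Note that the paper itself gives no proof of this statement: it is imported verbatim as Lemma~4.1(b) of \citet{jin2018q}, so there is nothing in the paper to compare against line by line. Your derivation checks out in every detail: with $\alpha_j = \frac{H+1}{H+j}$ one indeed gets $1-\alpha_j = \frac{j-1}{H+j}$, the product over $j=i+1,\dots,t$ collapses to $\frac{(t-1)!\,(H+i)!}{(i-1)!\,(H+t)!}$, and after multiplying by $\alpha_i$ the closed form $\alpha_t^i = (H+1)\prod_{m=i}^{i+H-1}m \,\big/ \prod_{m=t}^{t+H}m$ is right (it also correctly reproduces $\alpha_i^i=\alpha_i$ at $t=i$). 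The partial-fraction identity $\frac{1}{\prod_{m=t}^{t+H}m} = \frac{1}{H}\bigl(\frac{1}{\prod_{m=t}^{t+H-1}m} - \frac{1}{\prod_{m=t+1}^{t+H}m}\bigr)$ is verified by clearing denominators, the telescoping sum leaves exactly $\frac{1}{H\prod_{m=i}^{i+H-1}m}$, and the tail vanishes for $H\ge 1$; since the terms are nonnegative, convergence is not an issue. The original argument in \citet{jin2018q} establishes the same identity by a short induction (essentially the recursion you sketch as your alternative route), which avoids the factorial bookkeeping at the cost of having to guess the answer first; your direct telescoping computation derives the value $1+\frac{1}{H}$ rather than merely verifying it, which is a perfectly good, self-contained substitute.
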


\begin{figure}[t]
  \begin{center}
    \includegraphics[width=0.5\textwidth]{./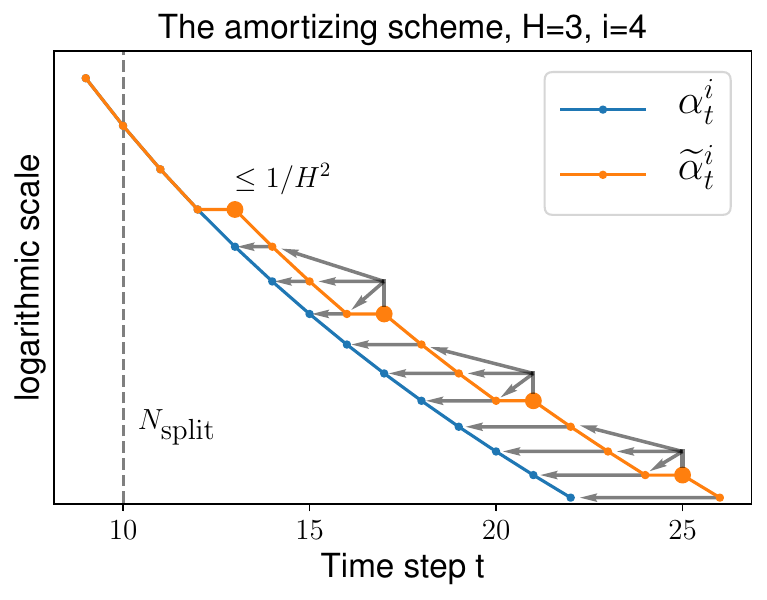}
  \caption{An illustration of the amortizing argument which relates
    the sequence $\tilde{\alpha}_t^i$ to $\alpha_t^i$.}
  \label{fig:amortizing_fig}
  \end{center}
\end{figure}

The next lemma, also regarding the learning rate sequence, is
new. This lemma shows how the skipped updates that arise due to our
buffering scheme do not significantly compromise the regret bound. The
proof is based on an amortizing argument, which is illustrated in
Figure~\ref{fig:amortizing_fig}.
\begin{lemma}\label{lem:lr_new}
Fix $i\geq H^2$ and $T_0\geq 0$. Consider the sequence
$\{\alpha_t^{i}\}_{t\ge i}$, and let $\{\tilde{\alpha}_t^{i}\}_{t\ge
  i}$ be the sequence formed by repeating every
$H^{\text{th}}$ term in $\{\alpha_t^{i}\}_{t\ge i}$ starting at $t =
T_0$. Then
\begin{align*}
\sum_{t=i}^{\infty} \tilde{\alpha}_t^{i} \le \rbr{1+\frac{2}{H}}^2.
\end{align*}
\end{lemma}
\begin{proof}
We can rewrite the sum of $\tilde{\alpha}_t^{i}$ as
\begin{align*}
\sum_{t=i}^{\infty}\tilde{\alpha}_t^{i} = \sum_{t=i}^\infty \alpha_t^{i} + \sum_{j=0}^{\infty} \alpha_{T_0+jH}^{i}.
\end{align*}
We will show how to absorb the second sum into the first, and we will
use a crediting scheme as in Figure~\ref{fig:amortizing_fig}.  The
first observation is that
\begin{align*}
\alpha_{T_0+jH}^{i} \leq \frac{1}{H}\sum_{k=0}^{H-1} \alpha_{T_0 + jH-k}^i,
\end{align*}
since $\alpha_t^i$ is a decreasing sequence. This observation immediately
addresses any term for which the $H$ previous terms appear in the
first sequence. This is any term where $T_0 + jH - i \geq H$.

We just have to handle the terms where $j$ is such
that $T_0+jH-i < H$. In this case we must have $j=0$.
Using the fact that $i \geq H^2$ we obtain
\begin{align*}
\alpha_{T_0}^i \leq \alpha_i^i \leq \frac{H+1}{H+H^2} \leq \frac{1}{H}.
\end{align*}
Putting these two observations together, we haven
\begin{align*}
\sum_{t=i}^{\infty}\tilde{\alpha}_t^{i} &= \sum_{t=i}^\infty \alpha_t^{i} + \sum_{j=0}^{\infty} \alpha_{T_0+jH}^{i}
= \sum_{t=i}^\infty \alpha_t^{i} + \sum_{j=1}^{\infty} \alpha_{T_0+jH}^{i} + \alpha_{T_0}^i\\
& \leq \sum_{t=i}^\infty \alpha_t^i + \frac{1}{H}\sum_{t=i}^\infty \alpha_t^i + \frac{1}{H}
\leq \rbr{1 + \frac{1}{H}}^2 + \frac{1}{H} \leq \rbr{ 1 + \frac{2}{H}}^2.
\end{align*}
The last step uses Lemma~\ref{lem:lr}.
\end{proof}

The next lemma establishes some basic facts on the counters used in the algorithm. 
\begin{lemma}\label{lem:numsample}
For any $k\in [K]$, $h\in [H]$ and ball $B\in \Fcal_h^k$, $B' \in \Pcal_h^k$ we have
\begin{align*}
&n_h^k(B) \in [0, N_{\mathrm{min}}(B) - 1], \quad \tilde{n}_h^k(B) = 0\\
&N_{\mathrm{min}}(B') \le n_h^k(B') \le \tilde{n}_h^k(B') \le (H\Lambda+\Lambda+1) \left(\frac{d_{\max}}{r(B)}\right)^2 =: N_{\mathrm{max}}(B)
\end{align*}
\end{lemma}

\begin{proof}
For $B\in \Fcal_h^k$, the upper bound on the number of updates comes
directly from the rule to move $B$ into the active set. Additionally,
the ball in the buffering set will not be played according to the
definition of $\relevant_h^k(x)$, so $\tilde{n}_h^k(B) = 0$.  For
$B'$, at the time it is added to $\Pcal_h^k$ we have $n_h^k(B') =
\tilde{n}_h^k(B')$. It will only be updated if it is played, but if it
is played it is not necessarily updated, so we have $n_h^k(B') \le
\tilde{n}_h^k(B')$.

The final bound is less obvious. A ball $B$ will no longer be played if
all of its children are in the active set, by definition of
$\relevant_h^k(x)$. Further, by the definition of $\dom_h^k$, when a
ball ``passes down'' its update to a child, that child must be in the
buffering set (otherwise the state action pair is not in the domain of $B$).
Before splitting, $B$ is played at most
$\left(\frac{d_{\max}}{r(B)}\right)^2$ times. After splitting, each
child will be updated at most $\frac{1}{4}\left(\frac{d_{\max}}{r(B) /
  2 }\right)^2 = \left(\frac{d_{\max}}{r(B)}\right)^2$ before being
moved to the active set. Since we have at most $\Lambda$ children (by
definition of the doubling constant) and we play the parent ball
$(H+1)$ times for each update to a child, we obtain the bound
$(H\Lambda+\Lambda+1) \left(\frac{d_{\max}}{r(B)}\right)^2$.
\end{proof}

Next we prove an elementary bound on the bias incurred by some ball. 
\begin{lemma}\label{lem:e6}
For any $(x,a,h,k) \in \Scal \times \Acal \times [H] \times [k]$ and ball $B\in \Pcal_h^k$ with $(x,a) \in \dom_h^k(B)$, if $B$ is updated at step $h$ in episodes $k_1 < k_2 < \dots < k_t < k$, where $t=n_h^k(B)$, then
\begin{align*}
\sum_{i=1}^t \alpha_t^i | Q^\star_h(x_h^{k_i}, a_h^{k_i}) - Q^\star_h(x,a)| \le 4Ld_{\max}\sqrt{(H\Lambda+\Lambda+1)}\frac{1}{\sqrt{t}}.
\end{align*}
\end{lemma}
\begin{proof}
By Lemma~\ref{lem:numsample}, we have $n_h^k(B) \le (H\Lambda+\Lambda+1) \left(\frac{d_{\max}}{r(B)}\right)^2$. Re-arranging, we find that $r(B) \leq d_{\max} \sqrt{\frac{(H\Lambda + \Lambda + 1)}{n_h^k(B)}}$. Of course we always have $\alpha_t^i \leq 1$, and so, by Lipschitzness we have
\begin{align*}
\sum_{i=1}^t \alpha_t^i |Q^\star_h(x_h^{k_i},a_h^{k_i}) - Q^\star_{h}(x,a)| &\le 2Lr(B) \sum_{i=1}^t \alpha_t^i
\le 2Ld_{\max}\sqrt{(H\Lambda+\Lambda+1)}\frac{1}{\sqrt{t}}. \tag*\qedhere
\end{align*}
\end{proof}

To bound the regret, our starting point is an upper bound on the difference between the optimistic $Q$--function and the optimal $Q^\star$ function.
\begin{lemma}\label{lem:e7}
For any $\delta \in (0,1)$ if $\beta_t := 2\sum_{i=1}^t \alpha_t^i b(i)$ then
\begin{align*}
\beta_t \le 8 \sqrt{\frac{H^3 \log(4HK/\delta)}{t}} + 16\frac{Ld_{\max}\sqrt{(H\Lambda+\Lambda+1)}}{\sqrt{t}}.
\end{align*}
Additionally, with probability at least $1-\delta/2$ the following holds simultaneously for all $(x, a, h, k) \in \Scal \times \Acal \times [H] \times [K]$ and ball $B$ such that $(x,a) \in \dom_h^k(B)$: 
\begin{align*}
0\le Q_h^k(B) - Q_h^{\star}(x, a) \le \one_{[t=0]}H + \beta_t + \sum_{i=1}^{t} \alpha_t^i(V_{h+1}^{k_i} - V^\star_{h+1})(x_{h+1}^{k_i}),
\end{align*}
where $t=n_h^k(B)$, and $k_1 < \dots < k_t$ are the episodes where $B$ was previously updated by the algorithm.
\end{lemma}
\begin{proof}
This is a modified version of Lemma E.7 from \cite{sinclair2019adaptive}.
The proof is exactly the same as the original, except that we use a larger bonus term $b(i)$ to account for larger upper bound in Lemma \ref{lem:e6}.
\end{proof}

This bound contains three parts.  The first is an upper bound for the
first step when there is no data.  The second term, $\beta_t$, is the
surplus that we add to ensure optimism.  The third part is an
``average'' of the estimated future regret.  The key observation is
that when $\beta_t$ is small, it can be absorbed into the future
regret.  In this way, we can clip $\beta_t$ proportionally to the
future regret which enables a form of gap dependent regret bound. This
clipping feature is demonstrated in the next lemma. Recall the
definition $\clip[\mu\mid \nu]:=\mu\one\{\mu \geq \nu\}$.
\begin{lemma}[Clipped upper bound]
For any $\delta \in (0,1)$ if $\beta_t := 2\sum_{i=1}^t \alpha_t^i b(i)$.
With probability at least $1-\delta/2$, $\forall h\in [H], k \in [K]$,
\begin{align*}
Q_h^k(B_h^k) - Q_h^{\star}(x_h^k, a_h^k) \le& \rbr{1+\frac{1}{H}}\rbr{\one_{[t=0]}H + \sum_{i=1}^{t} \alpha_t^i(V_{h+1}^{k_i} - V^\star_{h+1})(x_{h+1}^{k_i})} \\
& + \clip\sbr{\beta_t \mid \gap_h(x_h^k, a_h^k) / (H+1)}.
\end{align*}
\end{lemma}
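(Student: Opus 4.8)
The plan is to work on the probability-at-least-$1-\delta/2$ event of Lemma~\ref{lem:e7}, on which $0\le Q_h^k(B)-Q_h^\star(x,a)\le \one_{[t=0]}H+\beta_t+\sum_{i=1}^{t}\alpha_t^i(V_{h+1}^{k_i}-V_{h+1}^\star)(x_{h+1}^{k_i})$ holds simultaneously for all $(x,a,h,k)$ and every ball $B$ with $(x,a)\in\dom_h^k(B)$; from there on everything is deterministic, so the stated probability needs no further argument. Fix $(h,k)$ and abbreviate $\widehat\Delta\defeq Q_h^k(B_h^k)-Q_h^\star(x_h^k,a_h^k)\ge0$, $g\defeq\gap_h(x_h^k,a_h^k)/(H+1)$, and $W\defeq\sum_{i=1}^{t}\alpha_t^i(V_{h+1}^{k_i}-V_{h+1}^\star)(x_{h+1}^{k_i})$, so that the target reads $\widehat\Delta\le(1+\tfrac1H)(\one_{[t=0]}H+W)+\clip\sbr{\beta_t\mid g}$.

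The substantive preliminary step is to upgrade the $Q$-optimism in Lemma~\ref{lem:e7} to $V_{h'}^k(x)\ge V_{h'}^\star(x)$ for all $x,h',k$. Given $x$, the covering invariant furnishes a ball $B'\in\Pcal_{h'}^k$ with $(x,\pi_{h'}^\star(x))\in\dom_{h'}^k(B')$, hence $B'\in\relevant_{h'}^k(x)$; Lemma~\ref{lem:e7} then gives $Q_{h'}^k(B')\ge Q_{h'}^\star(x,\pi_{h'}^\star(x))=V_{h'}^\star(x)$, and since $V_{h'}^\star\le H$ we conclude $V_{h'}^k(x)=\min\{H,\max_{B\in\relevant_{h'}^k(x)}Q_{h'}^k(B)\}\ge V_{h'}^\star(x)$ (the base case $V_{H+1}\equiv V_{H+1}^\star\equiv0$ is trivial). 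As the weights $\alpha_t^i$ are nonnegative, this yields $W\ge0$; and applied at $h'=h$ together with $B_h^k=\argmax_{B\in\relevant_h^k(x_h^k)}Q_h^k(B)$, it yields $Q_h^k(B_h^k)\ge V_h^k(x_h^k)\ge V_h^\star(x_h^k)$, hence $\widehat\Delta\ge V_h^\star(x_h^k)-Q_h^\star(x_h^k,a_h^k)=\gap_h(x_h^k,a_h^k)=(H+1)g$.

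It then remains to split on the size of $\beta_t$. If $\beta_t\ge g$, then $\clip\sbr{\beta_t\mid g}=\beta_t$, and because $\one_{[t=0]}H+W\ge0$ we get $(1+\tfrac1H)(\one_{[t=0]}H+W)+\beta_t\ge\one_{[t=0]}H+\beta_t+W\ge\widehat\Delta$ by Lemma~\ref{lem:e7}. If $\beta_t<g$, then $\clip\sbr{\beta_t\mid g}=0$, and by the previous paragraph $\beta_t<g\le\widehat\Delta/(H+1)$; substituting this into $\widehat\Delta\le\one_{[t=0]}H+\beta_t+W$ gives $\tfrac{H}{H+1}\widehat\Delta<\one_{[t=0]}H+W$, i.e.\ $\widehat\Delta<(1+\tfrac1H)(\one_{[t=0]}H+W)$, which is the target since the clip term vanishes. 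I expect the only non-mechanical point to be the value-optimism upgrade: it requires invoking the covering invariant to pin down a \emph{relevant} ball whose domain contains the optimal action at $x$, and then checking that the truncation $\min\{H,\cdot\}$ in the definition of $V_{h'}^k$ is harmless because $V_{h'}^\star\le H$. Once $W\ge0$ and $\widehat\Delta\ge(H+1)g$ are in hand, the case split is the short deterministic computation that realizes the clipping idea of~\cite{simchowitz2019non}.
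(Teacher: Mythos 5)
Your proof is correct and follows essentially the same route as the paper's: condition on the event of Lemma~\ref{lem:e7}, lower bound $Q_h^k(B_h^k)-Q_h^\star(x_h^k,a_h^k)$ by the gap using optimism at the ball containing the optimal action together with the selection rule, then case split on $\beta_t$ versus $\gap_h(x_h^k,a_h^k)/(H+1)$. The only (harmless) differences are presentational: you route the gap lower bound through value-function optimism $V^k\ge V^\star$ rather than the paper's explicit chain through $B_h^{k\star}$, and you make explicit the nonnegativity of $\one_{[t=0]}H+\sum_i\alpha_t^i(V_{h+1}^{k_i}-V_{h+1}^\star)(x_{h+1}^{k_i})$ that the paper treats as trivial in the unclipped case.
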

\begin{proof}
We use $a_h^\star:\Xcal \rightarrow \Acal$ to denote a mapping from the state to the optimal action at stage $h$. By the definition of the gap
\begin{align*}
\gap_h(x_h^k, a_h^k) &= Q^\star_h(x_h^k, a_h^\star(x_h^k)) - Q^\star(x_h^k,a_h^k)
\le Q^k_h(B_h^{k\star}) - Q^\star_h(x_h^k,a_h^k) \\
&\le Q^k_h(B_h^{k}) - Q^\star_h(x_h^k,a_h^k)
\le \one_{[t=0]}H + \beta_t + \sum_{i=1}^{t} \alpha_t^i(V_{h+1}^{k_i} - V^\star_{h+1})(x_{h+1}^{k_i}),
\end{align*}
where $B_h^{k\star}$ is any ball in $\relevant_h^k(x_h^k)$ such that
$(x_h^k, a_h^\star(x_h^k))\in \dom_h^k(B_h^{k\star})$ (note that such
a ball must exist).
The first inequality is by the lower bound of
Lemma~\ref{lem:e7}, namely the optimism of $Q_h^k$. 
The
second uses the selection rule of choosing the ball with the
largest $Q_h^k(B)$ among those in $\relevant_h^k(x_h^k)$. The third inequality is by
the upper bound of Lemma~\ref{lem:e7}.

Now we consider two cases, if $\beta_t > \gap_h(x_h^k, a_h^k) /
(H+1)$, the bound is trivially implied by Lemma~\ref{lem:e7}. If
$\beta_t \le \gap_h(x_h^k, a_h^k) / (H+1)$, then
\begin{align*}
\gap_h(x_h^k, a_h^k) &\le \one_{[t=0]}H + \beta_t + \sum_{i=1}^{t} \alpha_t^i(V_{h+1}^{k_i} - V^\star_{h+1})(x_{h+1}^{k_i}) \\
&\le \one_{[t=0]}H + \sum_{i=1}^{t} \alpha_t^i(V_{h+1}^{k_i} - V^\star_{h+1})(x_{h+1}^{k_i}) + \gap_h(x_h^k, a_h^k) / (H+1)
\end{align*}
By re-arranging to move all $\gap$ terms to the left hand side, we have
\begin{align*}
\gap_h(x_h^k, a_h^k) \le \frac{H+1}{H}\rbr{\one_{[t=0]}H + \sum_{i=1}^{t} \alpha_t^i(V_{h+1}^{k_i} - V^\star_{h+1})(x_{h+1}^{k_i})}
\end{align*}
By Lemma~\ref{lem:e7} and our assumption
\begin{align*}
Q_h^k(B_h^k) - Q_h^{\star}(x_h^k, a_h^k) &\le \one_{[t=0]}H + \beta_t + \sum_{i=1}^{t} \alpha_t^i(V_{h+1}^{k_i} - V^\star_{h+1})(x_{h+1}^{k_i})\\
&< \one_{[t=0]}H + \gap_h(x_h^k, a_h^k) / (H+1) + \sum_{i=1}^{t} \alpha_t^i(V_{h+1}^{k_i} - V^\star_{h+1})(x_{h+1}^{k_i})\\
& \le \rbr{1+\frac{1}{H}}\rbr{\one_{[t=0]}H + \sum_{i=1}^{t} \alpha_t^i(V_{h+1}^{k_i} - V^\star_{h+1})(x_{h+1}^{k_i})}. \tag*\qedhere
\end{align*}
\end{proof}

The next step is to replace the future regret to $V^\star$ with the future regret of $V^{\pi_k}$, so that we can solve for the $h=1$ case recursively.

\begin{lemma}[Clipped recursion]\label{lem:cliprecur}
For any $\delta \in (0,1)$ if $\beta_t := 2\sum_{i=1}^t \alpha_t^i b(i)$.
With probability at least $1-\delta/2$, $\forall h\in [H], k \in [K]$,
\begin{align*}
\sum_{k=1}^K (V_h^k - V_h^{\pi^k})(x_h^k) \le& \sum_{k=1}^K\rbr{1+\frac{1}{H}}\rbr{H\one_{[n_h^k(B_h^k)=0]} + \xi_{h+1}^k + \clip\sbr{\beta_{n_h^k(B_h^k)} \mid \frac{\gap_h(x_h^k, a_h^k)}{H+1}}} \\
&+ \rbr{1+\frac{2}{H}}^3 \sum_{k=1}^K(V^k_{h+1} - V_{h+1}^{\pi^k})(x_{h+1}^k),
\end{align*}
where $\xi_{h+1}^k = \EE\sbr{V^\star_{h+1}(x) - V_{h+1}^{\pi_k}(x) \mid x_h^k, a_h^k} - (V_{h+1}^\star - V_{h+1}^{\pi_k})(x_{h+1}^k)$.
\end{lemma}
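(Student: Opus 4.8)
The goal is to turn the pointwise clipped upper bound (the preceding lemma) into a recursion on the sums $\sum_k (V_h^k - V_h^{\pi_k})(x_h^k)$. The plan is to follow the model-free martingale argument of \citet{jin2018q}, but carrying the clipped surplus through unchanged. First I would note that by optimism (the lower bound $Q_h^k(B) \ge Q_h^\star(x,a)$ in Lemma~\ref{lem:e7}, applied at the relevant ball for $x_h^k$) we have $V_h^k(x_h^k) \ge V_h^\star(x_h^k)$, and since the algorithm selects $B_h^k = \argmax_{B \in \relevant_h^k(x_h^k)} Q_h^k(B)$ we get $V_h^k(x_h^k) = \min\{H, Q_h^k(B_h^k)\} \le Q_h^k(B_h^k)$. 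Therefore
\begin{align*}
(V_h^k - V_h^{\pi_k})(x_h^k) \le Q_h^k(B_h^k) - Q_h^{\pi_k}(x_h^k, a_h^k) = \big(Q_h^k(B_h^k) - Q_h^\star(x_h^k,a_h^k)\big) + \big(Q_h^\star - Q_h^{\pi_k}\big)(x_h^k,a_h^k).
\end{align*}
The first bracket is handled by the clipped upper bound; the second bracket equals $\EE[(V_{h+1}^\star - V_{h+1}^{\pi_k})(x') \mid x_h^k, a_h^k]$ by the Bellman equations for $Q^\star$ and $Q^{\pi_k}$.

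**The two main manipulations.** Next I would sum the clipped upper bound over $k \in [K]$ and reorganize the double sum $\sum_k \sum_{i=1}^{n_h^k} \alpha_{n_h^k}^i (V_{h+1}^{k_i} - V_{h+1}^\star)(x_{h+1}^{k_i})$. This is the standard regrouping trick: for a fixed ball $B$, if it is encountered at episodes $k_1 < k_2 < \cdots$, then the term $(V_{h+1}^{k_{i}} - V_{h+1}^\star)(x_{h+1}^{k_i})$ appears once for each later encounter, weighted by $\sum_{t \ge i} \alpha_t^i = 1 + \tfrac1H$ (Lemma~\ref{lem:lr}). Summing over all balls, $\sum_k \sum_{i=1}^{n_h^k} \alpha_{n_h^k}^i (V_{h+1}^{k_i} - V_{h+1}^\star)(x_{h+1}^{k_i}) \le (1+\tfrac1H) \sum_k (V_{h+1}^k - V_{h+1}^\star)(x_{h+1}^k)$; one must also check this quantity is nonnegative (it is, by optimism) so the inequality is safe. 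Combining with the extra $(1+\tfrac1H)$ factor already sitting in front of the recursive term in the clipped upper bound gives the $(1+\tfrac1H)^2$ coefficient. The $\one_{[n_h^k=0]}$ term is kept as is (it contributes at most the number of distinct balls ever selected, but here we leave it symbolic), and $\beta_{n_h^k}$ stays inside the clip. Finally I would replace $(V_{h+1}^k - V_{h+1}^\star)(x_{h+1}^k)$ by $(V_{h+1}^k - V_{h+1}^{\pi_k})(x_{h+1}^k) - (V_{h+1}^\star - V_{h+1}^{\pi_k})(x_{h+1}^k)$ and then turn the $\EE[(V_{h+1}^\star - V_{h+1}^{\pi_k})(x')\mid x_h^k,a_h^k]$ term from the $Q^\star - Q^{\pi_k}$ piece into its sampled version plus the martingale difference $\xi_{h+1}^k$ exactly as defined in the statement; the two $-(V_{h+1}^\star - V_{h+1}^{\pi_k})$-type pieces must be bookkept so their coefficients match.

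**The obstacle.** The main delicacy is the bookkeeping of constants: tracking that the recursive coefficient is exactly $(1+\tfrac1H)^2$ and not something larger, and that the $(1+\tfrac1H)$ in front of the $\one_{[n_h^k=0]}$, $\xi_{h+1}^k$, and clipped surplus is consistent with how the $\xi$ term is extracted. In particular one has to be careful that the $V_{h+1}^\star - V_{h+1}^{\pi_k}$ difference appears both from regrouping the martingale-average term and from the $Q^\star - Q^{\pi_k}$ Bellman expansion, and that combining them does not inflate the constant; the clean way is to bound $V_{h+1}^k - V_{h+1}^\star \le V_{h+1}^k - V_{h+1}^{\pi_k}$ (using $V_{h+1}^\star \ge V_{h+1}^{\pi_k}$) in the regrouped term, absorbing the slack, and then handle the Bellman term separately via $\xi_{h+1}^k$. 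Everything else — nonnegativity of the regrouped sum, the identity $\sum_{t\ge i}\alpha_t^i = 1+\tfrac1H$, and the high-probability event — is inherited from Lemmas~\ref{lem:lr} and~\ref{lem:e7} and the clipped upper bound, so no new concentration argument is needed; the $1-\delta/2$ probability is just that of Lemma~\ref{lem:e7}.
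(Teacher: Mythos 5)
Your overall route is the same as the paper's: decompose $(V_h^k - V_h^{\pi_k})(x_h^k)$ through $Q_h^k(B_h^k) - Q_h^\star(x_h^k,a_h^k)$ plus $Q_h^\star(x_h^k,a_h^k) - Q_h^{\pi_k}(x_h^k,a_h^k)$, apply the clipped upper bound to the first piece and the Bellman identity (producing $\xi_{h+1}^k$) to the second, then regroup the double sum over ball visits using $\sum_{t\ge i}\alpha_t^i = 1+\tfrac1H$ together with optimism (so the regrouped terms are nonnegative and the per-term weight bound is safe). All of that matches the paper, and correctly yields the $(1+\tfrac1H)^2$ coefficient.

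The one step that fails as written is your proposed way of merging the two $(V_{h+1}^\star - V_{h+1}^{\pi_k})$-type pieces. After regrouping, the bound reads
\[
\left(1+\tfrac1H\right)^2\sum_{k}(V_{h+1}^k - V_{h+1}^\star)(x_{h+1}^k) \,+\, \sum_{k}(V_{h+1}^\star - V_{h+1}^{\pi_k})(x_{h+1}^k) \,+\, \sum_k \xi_{h+1}^k \,+\, \cdots
\]
Replacing $V_{h+1}^k - V_{h+1}^\star$ by $V_{h+1}^k - V_{h+1}^{\pi_k}$ in the first sum only makes that sum larger; it does not absorb the second sum, which survives as an extra additive term $\sum_k(V_{h+1}^\star - V_{h+1}^{\pi_k})(x_{h+1}^k)$ that does not appear in the lemma. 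That leftover is itself of the same order as $\sum_k(V_{h+1}^k - V_{h+1}^{\pi_k})$, so carrying it through the recursion would inflate the per-step coefficient to roughly $(1+\tfrac1H)^2 + 1$, i.e.\ an $\Omega(2^H)$ blow-up rather than the benign $e^2$. The correct merge goes the other direction: keep the regrouped term as $(1+\tfrac1H)^2\sum_k(V_{h+1}^k - V_{h+1}^\star)$, observe that $\sum_k(V_{h+1}^\star - V_{h+1}^{\pi_k}) \ge 0$ and may therefore be multiplied by $(1+\tfrac1H)^2 \ge 1$, and then the two sums combine exactly into $(1+\tfrac1H)^2\sum_k(V_{h+1}^k - V_{h+1}^{\pi_k})$. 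With that one-line correction your argument coincides with the paper's proof.
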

\begin{proof}
First, consider stage $h$ in episode $k$ and let $B_h^k$ be the ball
that is chosen. Define $t = n_h^k(B_h^k)$. Then by applying the previous lemma, we have
\begin{align*}
V_h^k(x_h^k) - V_h^{\pi^k}(x_h^k) &= \max_{B\in \relevant_h^k(x_h^k)} Q_h^k(B) - Q_h^{\pi^k}(x_h^k, a_h^k)
= Q_h^k(B_h^k) - Q_h^{\pi^k}(x_h^k, a_x^k)\\
&= Q_h^k(B_h^k) - Q_h^\star(x_h^k, a_h^k) + Q_h^\star(x_h^k, a_h^k) - Q_h^{\pi^k}(x_h^k, a_x^k)\\
&\leq \rbr{1+\frac{1}{H}}\rbr{\one_{[t=0]}H + \sum_{i=1}^{t} \alpha_t^i(V_{h+1}^{k_i} - V^\star_{h+1})(x_{h+1}^{k_i})} + \clip\sbr{\beta_t \mid \frac{\gap_h(x_h^k, a_h^k)}{H+1}} \\
& \qquad+ (V^\star_{h+1} - V^{\pi^k}_{h+1})(x_{h+1}^k) + \xi_{h+1}^k.
\end{align*}
Summing over episodes, let $t_h^k=n_h^k(B_h^k)$ and let $k_i(B_h^k)$ be the episode where $t_h^k$ is incremented for the $i^{\textrm{th}}$ time.
\begin{align*}
\sum_{k=1}^K V_h^k(x_h^k) - V_h^{\pi^k}(x_h^k) &\le \sum_{k=1}^K \rbr{1+\frac{1}{H}}\left(\one_{[n_h^k(B_h^k)=0]}H + \clip\sbr{\beta_{n_h^k(B_h^k)}, \frac{\gap_h(x_h^k, a_h^k)}{H+1}}\right) \\
& \qquad + \rbr{1 + \frac{1}{H}}\sum_{k=1}^K\sum_{i=1}^{n_h^k(B_h^k)} \alpha_{n_h^k}^i(V_{h+1}^{k_i(B_h^k)} - V^\star_{h+1})(x_{h+1}^{k_i(B_h^k)})  \\
& \qquad+ \sum_{k=1}^K \left((V^\star_{h+1} - V^{\pi^k}_{h+1})(x_{h+1}^k) + \xi_{h+1}^k \right).
\end{align*}
For any ball $B$, let $T_0(B)$ be the first time that it is played but not updated, i.e., the first time that $\tilde{n}_h^k(B) > n_h^k(B)$. 
In the terminology of lemma~\ref{lem:lr_new}, for any ball $B$, we define the sequence $\tilde{\alpha}_t^i(T_0(B))$ with this value of $T_0$. 
Now, using the observation in~\citet{jin2018q,song2019efficient}, we can rearrange the second term and use lemma~\ref{lem:lr_new}:
\begin{align*}
\sum_{k=1}^K\sum_{i=1}^{n_h^k} \alpha^i_{n_h^k}(V_{h+1}^{k_i(B_h^k)} - V_{h+1}^\star)(x_{h+1}^{k_i(B_h^k)})
&\le \sum_{k=1}^{K} (V^k_{h+1} - V^\star_{h+1})(x_{h+1}^k) \sum_{t=n_h^k}^{\infty}  \tilde{\alpha}_t^{n_h^k}(T_0(B_h^k)) \\
&\le \rbr{1+\frac{2}{H}}^2 \sum_{k=1}^K (V_{h+1}^k - V^\star_{h+1})(x_{h+1}^k).
\end{align*}
The first inequality is based on the following reasoning: The left
hand side is ``backward'' looking, in the sense that for each episode
$k$ the expression involves the previous \emph{updates} to the ball
played. On the other hand, the right hand side is ``forward'' looking,
in that episode $k$ also results in an update to some ball (which may
not be the one that is played), and so it appears every subsequent
time the latter ball is \emph{played}. Thus, rather than looking at
the previous updates to the ball played in episode $k$, we can look at
the future \emph{plays} of the ball updated in episode $k$. This is
why we switch the weight sequence from $\alpha_{n_h^k}^i$ to
$\tilde{\alpha}_{t}^{n_h^k}$, where recall that the latter has every
$H^{\textrm{th}}$ term repeated, possibly after some initial burn-in
phase.

Since $V_{h+1}^{\pi^k}(x_{h+1}^k) \le V^\star_{h+1}(x_{h+1}^k)$, we have
\begin{align*}
&\rbr{1+\frac{1}{H}}\rbr{1+\frac{2}{H}}^2 \sum_{k=1}^K (V_{h+1}^k - V^\star_{h+1})(x_{h+1}^k) + \sum_{k=1}^K(V^\star_{h+1} - V_{h+1}^{\pi^k})(x_{h+1}^k) \\
&\le \rbr{1+\frac{2}{H}}^3 \rbr{\sum_{k=1}^K (V_{h+1}^k - V^\star_{h+1})(x_{h+1}^k) + \sum_{k=1}^K(V^\star_{h+1} - V_{h+1}^{\pi^k})(x_{h+1}^k)} \\
&=\rbr{1+\frac{2}{H}}^3 \sum_{k=1}^K(V^k_{h+1} - V_{h+1}^{\pi^k})(x_{h+1}^k).
\end{align*}
So we have
\begin{align*}
\sum_{k=1}^K (V_h^k - V_h^{\pi^k})(x_h^k) \le& \sum_{k=1}^K\rbr{1+\frac{1}{H}}\rbr{H\one_{[n_h^k(B_h^k)=0]} + \xi_{h+1}^k + \clip\sbr{\beta_{n_h^k(B_h^k)} \mid \frac{\gap_h(x_h^k, a_h^k)}{H+1}}} \\
&+ \rbr{1+\frac{2}{H}}^3 \sum_{k=1}^K(V^k_{h+1} - V_{h+1}^{\pi^k})(x_{h+1}^k).\tag*\qedhere
\end{align*}
\end{proof}

There are two terms that we need to bound. The $\xi_{h+1}^k$ term can
be bounded by a concentration argument as shown in
\cite{sinclair2019adaptive}.

\begin{lemma}[Azuma--Hoeffding bound, Lemma E.9 from \cite{sinclair2019adaptive}]
For any $\delta \in (0,1)$, with probability at least $1 - \delta / 2$
\begin{align*}
\sum_{h=1}^H \sum_{k=1}^k \xi_{h+1}^k \le 2\sqrt{2H^3K\log(4HK/\delta)}.
\end{align*}
\end{lemma}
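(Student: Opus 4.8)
The plan is to recognize $\cbr{\xi_{h+1}^k}_{h,k}$, read off in the natural order of play, as a bounded martingale difference sequence, and then apply the Azuma--Hoeffding inequality. The main work is setting up the filtration so that the conditioning in the definition of $\xi_{h+1}^k$ lines up with a genuine filtration; the concentration step is then immediate.

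First I would fix the filtration. For $(h,k) \in [H]\times[K]$ let $\mathcal{F}_h^k$ be the $\sigma$-algebra generated by everything observed up to and including the choice of action $a_h^k$ in episode $k$: all of $\tau_1,\dots,\tau_{k-1}$, the policies $\pi_1,\dots,\pi_k$, the starting states $x_1^1,\dots,x_1^k$, and $x_1^k,a_1^k,r_1^k,\dots,x_h^k,a_h^k$ (together with any internal randomness the algorithm has used so far). Order the pairs $(h,k)$ lexicographically, first by episode $k$ and then by stage $h$. Under this order every term preceding $\xi_{h+1}^k$ comes from an earlier episode or an earlier stage of episode $k$, hence is $\mathcal{F}_h^k$-measurable, while $\xi_{h+1}^k$ itself becomes measurable as soon as $x_{h+1}^k$ is revealed.

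Next I would verify the martingale property and the bound on the increments. The first summand of $\xi_{h+1}^k$, namely $\EE\sbr{V_{h+1}^\star(x)-V_{h+1}^{\pi_k}(x)\mid x_h^k,a_h^k}$, is $\mathcal{F}_h^k$-measurable, because $\pi_k$ is fixed before episode $k$ begins and $V_{h+1}^\star$ is a fixed function. Since $x_{h+1}^k \sim \PP(\cdot\mid x_h^k,a_h^k)$, the conditional expectation given $\mathcal{F}_h^k$ of the second summand $(V_{h+1}^\star-V_{h+1}^{\pi_k})(x_{h+1}^k)$ equals exactly the first summand, so $\EE\sbr{\xi_{h+1}^k\mid \mathcal{F}_h^k}=0$; thus the sequence in play order is a martingale difference sequence. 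For the range, optimality of $\pi^\star$ together with $r\in[0,1]$ gives $0\le V_{h+1}^{\pi_k}\le V_{h+1}^\star\le H$, so $V_{h+1}^\star-V_{h+1}^{\pi_k}$ takes values in $[0,H]$, and each $\xi_{h+1}^k$ is the centering of such a variable; in particular $\abr{\xi_{h+1}^k}\le H$.

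Finally I would apply Azuma--Hoeffding over the $n = HK$ increments, each of conditional range at most $H$: with probability at least $1-\delta/2$,
\[
\sum_{h=1}^H\sum_{k=1}^K \xi_{h+1}^k \;\le\; H\sqrt{2HK\log(2/\delta)} \;\le\; 2\sqrt{2H^3K\log(4HK/\delta)},
\]
where the last inequality merely loosens the logarithmic factor using $HK\ge 1$. The only step that requires genuine care is the bookkeeping in the first two paragraphs---choosing the filtration and the ordering so that $\pi_k$, and hence $V_{h+1}^{\pi_k}$, is measurable exactly when $\xi_{h+1}^k$ is conditioned on; once that is in place the bound follows as in Lemma~E.9 of~\citet{sinclair2019adaptive}.
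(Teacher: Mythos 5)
Your proof is correct and is essentially the same argument the paper relies on: the paper simply cites Lemma E.9 of \citet{sinclair2019adaptive}, whose proof is exactly this martingale-difference-plus-Azuma--Hoeffding computation with increments bounded by $H$ over $HK$ terms (the larger $\log(4HK/\delta)$ factor being inherited from the shared confidence parameter rather than needed here). Your filtration bookkeeping and the measurability of $\pi_k$ and hence $V_{h+1}^{\pi_k}$ are handled correctly, so nothing is missing.
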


The clipped $\beta_t$ term requires a more refined treatment to relate it to the zooming number or zooming dimension. Recall our definition of the near-optimal space
\begin{align*}
\Pcal_{h,r}^{Q^\star} = \{(x, a): \gap_h(x,a) \le c_1 r \},
\end{align*}
where $c_1 =\frac{2(H+1)}{d_{\max}}+ 2L$. Define the stage-dependent zooming number as
\begin{align*}
z_{h,c} = \inf\{ d>0 :  |\Pcal_{h,r}^{Q^\star}| \le cr^{-d}\}.
\end{align*}
The following is our key lemma that bounds surplus $\beta_t$ using the zooming number.

\begin{lemma}
\label{lem:beta_sum}
\begin{align*}
\sum_{h=1}^{H}\sum_{k=1}^K \clip[\beta_{n_h^k}, \frac{\gap_h(x_h^k, a_h^k)}{H+1}] \le &
\sum_{h=1}^{H} 32(\sqrt{H^3\log(4HK/\delta)} + L d_{\max}\sqrt{2H\Lambda}) \\
&\inf_{r_0 \in (0, d_{\max}]} \left( \sum_{r=d_{\max}2^{-i},r\ge r_0} N_r^{\text{pack}}(\Pcal_{h,r}^{Q^\star}) \frac{2d_{\max}\sqrt{2H\Lambda}}{r} + \frac{2Kr_0}{d_{\max}} \right).
\end{align*}
\end{lemma}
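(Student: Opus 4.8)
The plan is to bound the clipped surplus ball-by-ball, and then sum over scales using the packing structure of the near-optimal set. Fix a stage $h$. By Lemma~\ref{lem:numsample} every ball $B \in \Pcal_h^K$ is selected at most $\tfrac34(d_{\max}/r(B))^2$ times, and on each such episode $k$ with $B_h^k = B$ we have $n_h^k(B) \ge \tfrac14(d_{\max}/r(B))^2$ (counting ancestors), so $\beta_{n_h^k} \le 8\sqrt{H^3\log(4HK/\delta)/n_h^k} + 16 Ld_{\max}/\sqrt{n_h^k} \le 16(\sqrt{H^3\log(4HK/\delta)} + Ld_{\max}) \cdot r(B)/d_{\max}$ by Lemma~\ref{lem:e7}. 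Hence the total (unclipped) surplus charged to a fixed ball $B$ is at most $\tfrac34(d_{\max}/r(B))^2 \cdot 16(\sqrt{H^3\log(4HK/\delta)} + Ld_{\max}) r(B)/d_{\max} \le 12(\sqrt{H^3\log(4HK/\delta)} + Ld_{\max}) \cdot d_{\max}/r(B)$, i.e.\ $O\big((\sqrt{H^3\log(\cdot)}+Ld_{\max})\, d_{\max}/r(B)\big)$ per ball.

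\textbf{Key step: clipping kills far-from-optimal balls.} The crux is showing that only balls contained in the near-optimal set $\Pcal_{h,r}^{Q^\star}$ actually contribute. Consider a ball $B$ of radius $r = r(B)$ that is selected on episode $k$ with $(x_h^k,a_h^k) \in B$. If the clip is active, $\beta_{n_h^k} \ge \gap_h(x_h^k,a_h^k)/(H+1)$; combined with the upper bound $\beta_{n_h^k} \le 16(\sqrt{H^3\log(4HK/\delta)} + Ld_{\max})\, r/d_{\max}$, this forces $\gap_h(x_h^k,a_h^k) \le 16(H+1)(\sqrt{H^3\log(4HK/\delta)} + Ld_{\max})\, r/d_{\max}$ — so the selected point has gap $O(r)$. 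Now invoke Lipschitzness: every other $(x',a') \in B$ satisfies $|\gap_h(x',a') - \gap_h(x_h^k,a_h^k)| \le |V_h^\star(x') - V_h^\star(x_h^k)| + |Q_h^\star(x',a') - Q_h^\star(x_h^k,a_h^k)| \le 2L\cdot\diam(B) \le 2Lr$ (using \eqref{eq:q_lip} and \eqref{eq:v_lip} with $\diam(B)\le r$, since the net construction gives balls of radius $r$). Adding $2Lr$ to the bound and using $c_1 = \tfrac{2(H+1)}{d_{\max}} + 2L$ shows $B \subseteq \Pcal_{h,r}^{Q^\star}$. Thus a ball of radius $r$ contributes nonzero clipped surplus only if $B \subseteq \Pcal_{h,r}^{Q^\star}$ — one has to be a bit careful that the constant bookkeeping lines up with the $c_1$ in Definition~\ref{def:good_set}, and this constant-chasing is the main (if routine) obstacle.

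\textbf{Summing over scales.} For a dyadic scale $r = d_{\max}2^{-i}$, let $\mathcal{A}_{h,r}$ be the set of active balls of radius $r$ that ever contribute clipped surplus. By the previous step $\mathcal{A}_{h,r} \subseteq \{B : B \subseteq \Pcal_{h,r}^{Q^\star}\}$, and by the Separation invariant the centers of balls in $\mathcal{A}_{h,r}$ are pairwise at distance $\ge r$ and lie in $\Pcal_{h,r}^{Q^\star}$, so $|\mathcal{A}_{h,r}| \le N_r^{\text{pack}}(\Pcal_{h,r}^{Q^\star})$. Therefore $\sum_{k: r(B_h^k) = r} \clip[\beta_{n_h^k}\mid \cdot] \le |\mathcal{A}_{h,r}| \cdot 12(\sqrt{H^3\log(4HK/\delta)}+Ld_{\max})\, d_{\max}/r$. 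Summing the dyadic scales down to any cutoff $r_0$ handles the ``zoomed-in'' balls; for balls with $r < r_0$ (equivalently more than $(d_{\max}/r_0)^2$ selections would be needed), we instead bound each clipped surplus crudely by $\beta_{n_h^k} \le 16(\sqrt{H^3\log(4HK/\delta)}+Ld_{\max}) r_0/d_{\max}$ and there are at most $K$ episodes total, giving the $K r_0/d_{\max}$ term. Taking the infimum over $r_0 \in (0,d_{\max}]$ and summing over $h \in [H]$, with the constant $32$ absorbing the $12$ and a factor from the crude tail bound, yields the stated inequality.
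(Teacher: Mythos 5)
Your overall route is the same as the paper's: account for the surplus ball-by-ball via Lemma~\ref{lem:numsample} (at most $\tfrac{3}{4}(d_{\max}/r)^2$ selections, each with $n\ge \tfrac14(d_{\max}/r)^2$, hence $O(c_2\, d_{\max}/r)$ total per ball with $c_2 = 16(\sqrt{H^3\log(4HK/\delta)}+Ld_{\max})$), argue that clipping removes every ball whose minimum gap is large relative to its radius, use Lipschitzness of $Q^\star$ and $V^\star$ to place the surviving radius-$r$ balls inside $\Pcal_{h,r}^{Q^\star}$, count them by $N_r^{\text{pack}}(\Pcal_{h,r}^{Q^\star})$ using the separation invariant, and sum dyadic scales down to a cutoff $r_0$, charging the finer balls to the $Kr_0/d_{\max}$ term. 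This is exactly the paper's two-case argument, phrased in the contrapositive.

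The one step that does not go through as written is precisely the ``constant-chasing'' you defer. From ``clip active'' you deduce $\gap_h(x_h^k,a_h^k)\le 16(H+1)(\sqrt{H^3\log(4HK/\delta)}+Ld_{\max})\,r/d_{\max}$ and then claim $B\subseteq\Pcal_{h,r}^{Q^\star}$. But Definition~\ref{def:good_set} requires $\gap_h\le\rbr{2(H+1)/d_{\max}+2L}r$, and your threshold exceeds this by a factor of order $\sqrt{H^3\log(HK/\delta)}+Ld_{\max}$ --- not a universal constant, since it grows with $H$ and $\log K$. Your argument therefore only places $B$ in a strictly larger near-optimal set, which would change the zooming number (and zooming dimension) appearing in the theorem. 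The paper's bookkeeping avoids this by pulling the prefactor $c_2$ outside the clip \emph{before} the threshold comparison, so that $1/\sqrt{n}$ (not $\beta_n=c_2/\sqrt{n}$) is compared against $\gap_h(B)/(H+1)$; with $n\ge\tfrac14(d_{\max}/r)^2$ this is what produces the clean $2(H+1)/d_{\max}$ term in $c_1$. Keeping $c_2$ inside the clip, as you do, forces you either to enlarge $c_1$ accordingly or to establish an inequality of the form $\clip[c_2\mu\mid\nu]\le c_2\clip[\mu\mid\nu]$, and that reduction is where the actual content of the constant-matching lies --- it is not routine. Two further minor slips: $n\ge\tfrac14(d_{\max}/r)^2$ gives $1/\sqrt{n}\le 2r/d_{\max}$, not $r/d_{\max}$; and a radius-$r$ ball has diameter up to $2r$, so comparing two arbitrary points of $B$ costs $2L\cdot 2r$ unless, as in the paper, you compare every point to the center.
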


\begin{proof}
Let $c_2 = 16(\sqrt{H^3\log(4HK/\delta)} + L d_{\max}\sqrt{2H\Lambda})$. By Lemma \ref{lem:e7} we have
\begin{align*}
\beta_{n_h^k} \le 16(\sqrt{H^3\log(4HK/\delta)} + L d_{\max}\sqrt{H\Lambda+\Lambda+1}) \frac{1}{\sqrt{n_h^k}} \leq \frac{c_2}{\sqrt{n_h^k}}
\end{align*}
Let $N_{\mathrm{min}}(B) = \frac{1}{4} \left(\frac{d_{\max}}{r(B)}\right)^2$,
and $N_{\mathrm{max}}(B) = \left(\frac{d_{\max}}{r(B)}\right)^2(H\Lambda+\Lambda+1)$. Considering
Lemma~\ref{lem:numsample}, we know that whenever $\beta_{n_h^k}$ appears in our regret bound (which only happens when a ball is played), we have
\begin{align*}
N_{\mathrm{min}}(B) \le  n_h^k(B) \le N_{\mathrm{max}}(B).
\end{align*}
Letting $\gap_h(B) = \min_{(x,a)\in B} \gap_h(x,a)$ be the minimum gap $B$, we can
rearrange the sum for each ball.
\begin{align*}
\sum_{k=1}^K \clip\sbr{\beta_{n_h^k(B_h^k)}\mid \frac{\gap_h(x_h^k, a_h^k)}{H+1}} &\le \sum_{B\in \Pcal_h^K} \sum_{n=N_{\mathrm{min}}(B)}^{N_{\mathrm{max}}(B)} \clip\sbr{\frac{c_2}{\sqrt{n}}\mid \frac{\gap_h(B)}{H+1}}\\
&\le c_2\sum_{B\in \Pcal_h^K} \sum_{n=N_{\mathrm{min}}(B)}^{N_{\mathrm{max}}(B)} \clip\sbr{\frac{1}{\sqrt{n}}, \frac{\gap_h(B)}{H+1}}
\end{align*}
The last step is due to the fact that $c_2 > 1$ and if $ \frac{c_2}{\sqrt{n}} < \frac{\gap_h(B)}{H+1}$ then $\frac{1}{\sqrt{n}} < \frac{\gap_h(B)}{H+1}$.
Now, ignoring clipping, the inner sum can be bounded by
\begin{align*}
\sum_{n=N_{\mathrm{min}}(B)}^{N_{\mathrm{max}}(B)} \frac{1}{\sqrt{n}}
\le \int_{i=0}^{N_{\mathrm{max}}(B)}\frac{1}{\sqrt{i+ N_{\mathrm{min}}(B)}}
\leq 2\frac{d_{\max}\sqrt{H\Lambda+\Lambda+1}}{r(B)}.
\end{align*}

With clipping, we consider two cases.

\noindent \textbf{Case 1:} If $\gap_h(B) \ge \frac{2(H+1)r(B)}{d_{\max}}$, then the regret on ball $B$ will always be clipped: 
\begin{align*}
\frac{1}{\sqrt{n_h^k(B)}} \le \frac{1}{\sqrt{N_{\mathrm{min}}(B)}} = \frac{2 r(B)}{d_{\max}} \le \frac{\gap_h(B)}{H+1}.
\end{align*}

\noindent \textbf{Case 2:} If $\gap_h(B) < \frac{2(H+1)r(B)}{d_{\max}}$, then we
will pay $2d_{\max}\sqrt{2H\Lambda}/r(B)$ for this ball. However, we
will show that this ball also belongs to the near optimal set, so that
we do not incur this term too many times.

Let $(x_c, a_c)$ be the center of $B$ and $(x_m, a_m)\in B$ be the point that has the minimum gap, i.e. the point that achieves $\gap_h(B)$. Using the assumption that $Q^\star$ and $V^\star$ are Lipschitz:
\begin{align*}
\gap_h(x_c, a_c) - \gap_h(B) &= Q_h^\star(x_c, a_h^\star(x_c)) - Q^\star_h(x_c, a_c) - (Q_h^\star(x_m, a_h^\star(x_m)) - Q^\star_h(x_m, a_m)) \\
&\le 2Lr(B),
\end{align*}
so we know that all the points in $B$ have small gaps relative to $r$. In particular,
\begin{align*}
\gap_h(x_c,a_c) \le \gap_h(B) + 2Lr(B) \le \frac{2 (H+1)r(B)}{d_{\max}} + 2Lr(B).
\end{align*}
Thus, we have $(x_c, a_c) \in \Pcal_{h,r(B)}^{Q^\star}$.  Now we are
ready bound the sum. Note that for a ball $B \in \Pcal_h^K$, either
$B$ gets clipped, or the center of $B$ is in
$\Pcal_{h,r(B)}^{Q^\star}$.  Since all the balls of radius $r$ are at
least $r$ apart, we can have at most
$N_r^{\text{pack}}(\Pcal_{h,r}^{Q^\star})$ in the latter case.
\begin{align*}
\sum_{k=1}^K \clip\sbr{\beta_{n_h^k}\mid \frac{\gap_h(x_h^k, a_h^k)}{H+1}} &\le \sum_{B\in \Pcal_h^K} \sum_{n=N_{\mathrm{min}}(B)}^{N_{\mathrm{max}}(B)} \clip\sbr{\frac{c_2}{\sqrt{n}}\mid \frac{\gap_h(B)}{H+1}}\\
\hspace{-1ex}\le c_2 &\inf_{r_0 \in (0, d_{\max}]} \left( \sum_{r=d_{\max}2^{-i},r\ge r_0} N_r^{\text{pack}}(\Pcal_{h,r}^{Q^\star}) \frac{2  d_{\max}\sqrt{2H\Lambda}}{r} + \frac{2Kr_0}{d_{\max}} \right).
\end{align*}
The second term uses the fact that for any ball $B$ with $r(B) \leq r_0$, we have $N_{\mathrm{min}}(B) \leq \frac{1}{4}\rbr{\frac{d_{\max}}{r_0}}^2$.
\end{proof}

Now we are ready to prove Theorem \ref{thm:main}.
\begin{proof}[Proof of Theorem \ref{thm:main}]
We apply Lemma \ref{lem:cliprecur} recursively.
\begin{align*}
&\sum_{k=1}^K (V_1^k - V_1^{\pi^k})(x_1^k) \\
\le& (H+1) + \sum_{k=1}^K\rbr{1+\frac{1}{H}}\rbr{ \xi_{2}^k + \clip\sbr{\beta_{n_1^k(B_1^k)} \mid \frac{\gap_1(x_1^k, a_1^k)}{H+1}}} + \rbr{1+\frac{2}{H}}^3 \sum_{k=1}^K(V^k_{2} - V_{2}^{\pi^k})(x_{2}^k)\\
\le& \sum_{h=1}^{H} H\rbr{1+\frac{2}{H}}^{3(h-1)} + \sum_{h=1}^H \rbr{1+\frac{2}{H}}^{3h} \sum_{k=1}^K\rbr{\xi_{h+1}^k + \clip\sbr{\beta_{n_h^k(B_h^k)} \mid \frac{\gap_h(x_h^k, a_h^k)}{H+1}}} \\
\le& 404 H^2 + 404\sum_{h=1}^H\sum_{k=1}^K\rbr{ \clip\sbr{\beta_{n_h^k(B_h^k)} \mid \frac{\gap_h(x_h^k, a_h^k)}{H+1}}  + \xi_{h+1}^k}.
\end{align*}
Here we are using that $(1+2/H)^{3H} \leq \rbr{(1+2/H)^{H/2}}^{6}\leq e^6 \leq 404$. 
Next, we use the Azuma-Hoeffding inequality above to obtain:
\begin{align*}
  404\sum_{h=1}^H\sum_{k=1}^K \xi_{h+1}^k \leq 808\sqrt{2H^3K\log(4HK/\delta)}. 
\end{align*}
Finally, we use Lemma~\ref{lem:beta_sum} to bound the clipped surplus term:
\begin{align*}
404\sum_{h=1}^H\sum_{k=1}^K&\clip\sbr{\beta_{n_h^k(B_h^k)} \mid \frac{\gap(x_h^k,a_h^k)}{H+1}} \\
  & \leq 12928\sum_{h=1}^H\rbr{ \sqrt{H^3\log(4HK/\delta)} + Ld_{\max}\sqrt{2H\Lambda}} \\
  & ~~~~~~~~ \times \inf_{r_0 \in (0,d_{\max}]} \rbr{ \sum_{r=d_{\max}2^{-i}, r\geq r_0} N_r^{\text{pack}}(\Pcal_{h,r}^{Q^\star}) \frac{2d_{\max}\sqrt{2H\Lambda}}{r} + \frac{2Kr_0}{d_{\max}} }.
\end{align*}
Combining the above bounds, we obtain the theorem. 
\end{proof}

\end{document}